\documentclass[a4paper,11pt]{article}
  
\usepackage[inline]{enumitem}
\usepackage[margin=1in]{geometry}
\usepackage{amsmath,amssymb,amsthm,bm,dsfont}
\usepackage{mathtools,graphicx}
\usepackage{makecell,longtable}
\usepackage{algorithm,algpseudocode}
%\usepackage[colorlinks,
%            linkcolor=blue,
%            citecolor=blue,
%            urlcolor=magenta,
%            pdftex
%]{hyperref}

\usepackage[capitalize,nameinlink]{cleveref}
\usepackage[authoryear]{natbib}
 \bibpunct[, ]{(}{)}{,}{a}{}{,}%
 \def\newblock{\ }%

\DeclareMathOperator{\sign}{sign}

\newtheorem{theorem}{Theorem}
\newtheorem{lemma}{Lemma}

\newtheorem{proposition}{Proposition}
\newtheorem{assumption}{Assumption}
\newtheorem{definition}{Definition}

\newtheorem{remark}{Remark}
\newtheorem{example}{Example}

\newcommand{\abs}[1]{\lvert#1\rvert}

\newcommand{\babs}[1]{\bigl\lvert#1\bigr\rvert}

\newcommand{\babss}[1]{\biggl\lvert#1\biggr\rvert}

\newcommand{\Bnorm}[1]{\Bigl\lVert#1\Bigr\rVert}

\DeclareMathOperator*{\argmax}{arg\,max}

\newcommand{\ind}[1]{\mathds{1}\left\lbrace #1 \right\rbrace}

\DeclarePairedDelimiterX\Basics[1](){ #1}

\def\EE{{\mathbb{E}}}\def\PP{{\mathbb{P}}}
\def\RR{\mathbb{R}}\def\NN{\mathbb{N}}

\begin{document}
\title{Sample complexity of partition identification using multi-armed bandits}

\author{\vspace{0.1in} Sandeep  Juneja \\
 \vspace{0.1in}
Subhashini Krishnasamy  \\
TIFR, Mumbai}

\date{\today}

\maketitle

\begin{abstract}%
Given a vector of probability distributions, or arms, each of which can be sampled independently, we consider the problem of identifying the partition to which this vector belongs from a finitely partitioned universe of such vector of distributions. We study this as a pure exploration problem in multi armed bandit settings and develop sample complexity bounds on the total mean number of samples required for identifying the correct partition with high probability.  This framework subsumes  well studied problems such as finding the best arm or the best few arms.
We consider distributions belonging to the single parameter exponential family and
primarily consider partitions  where the vector of means of arms lie either in a given  set or its complement. The sets considered  correspond to distributions where there exists  a mean above a specified threshold, where the set is a half space and where either the set or its complement  is a polytope, or more generally, a convex set. 
  In these settings, we characterize the lower bounds on mean number of samples for each arm  highlighting 
  their dependence on the problem geometry. Further, inspired by the lower bounds, we propose algorithms  that can match these bounds asymptotically with decreasing probability of error.  Applications of this framework may be diverse. 
  We briefly discuss one associated with  finance.
\end{abstract}

\section{Introduction}
Suppose that  $\Omega$ denotes a collection of  vectors $ \nu =(\nu_1, \ldots, \nu_K)$ where each $\nu_i$ is a probability distribution on $\RR$. Further,  $\Omega= \cup_{i=1}^m \mathcal{A}_i$ where the component sets $\mathcal{A}_i$ are disjoint, and thus partition
$\Omega$.  In this set-up,  given $ \mu =(\mu_1, \ldots, \mu_K) \in \Omega$, we consider the problem of
identifying the correct component  $\mathcal{A}_i$ that contains $\mu$.
The distributions  $(\mu_i: i \leq K)$ are not known to us, however, it is possible to generate independent samples from each $\mu_i$. We call this the partition identification or $\mathcal{PI}$ problem.

  We consider algorithms that sequentially and adaptively generate samples from each distribution in  $\mu$  and then after generating finitely many samples,  stop  and announce a component of $\Omega$  that is inferred to contain $\mu$. Specifically,  we study the so-called $\delta$-PAC algorithms in the ${\mathcal PI}$ framework. As is well known, PAC algorithms stands for {\em probably approximately correct} algorithms.

 \begin{definition}
  {\em  An algorithm is said to be
{\em $\delta$-PAC} for the $\mathcal{PI}$ problem $\Omega= \cup_{i=1}^m \mathcal{A}_i$,  if, for every $\mu \in \Omega$, 
for any specified $\delta \in (0,1)$, it 
 restricts  the  probability of announcing  an incorrect component  to at most $\delta$. }
 \end{definition}

More generally, in similar sequential decision making problems, algorithms are said to provide
$\delta$-PAC guarantees if the probability of incorrect decision is bounded from above by $\delta$
for each $\delta \in (0,1)$.

In multi-armed bandit (MAB) literature, for any $\nu \in \Omega$, generating a sample from distribution $\nu_i$ is referred to as sampling from, or pulling,  an arm $i$. 
The $\mathcal{PI}$ framework is quite general and captures   popular pure exploration
problems studied in the MAB literature. 
For instance, the problem of
finding the best arm, that is, the arm with the highest mean,  is well studied and fits $\mathcal{PI}$ framework (see, e.g.,  in learning theory \cite{garivier2016optimal},  \cite{kaufmann2016complexity},  \cite{russo2016simple}, \cite{jamieson2014lil},     
\cite{bubeck2011pure},   \cite{audibert2010best},
\cite{even2006action}, \cite{mannor2004sample}; in earlier statistics literature - \cite{jennison1982asymptotically}, \cite{bechhofer1968sequential}, \cite{paulson1964sequential}; in simulation theory   literature
- \cite{glynn2004large},  \cite{kim2001fully}, \cite{chen2000simulation},
\cite{dai1996converge}, \cite{ho1992ordinal}). 

 More generally, identifying $r$ arms (for some $r<K$) with the the largest $r$ means amongst $K$ distributions
 also is a  $\mathcal{PI}$ problem ( see, e.g., \cite{kaufmann2013information}, \cite{kalyanakrishnan2012pac}).

Sample complexity of an algorithm is defined as the expected total number of arms pulled by the algorithm  before it terminates. 
Further,  $\delta$-PAC guarantees provided by algorithms  impose constraints on expected number of times
each arm must be pulled. These constraints are made explicit using the `transportation  inequality'  developed by \cite{garivier2016optimal}. (Their work in turn is built upon `change of measure' based  earlier analysis that goes back at least to  \cite{lai1985asymptotically}. Also see,  \cite{mannor2004sample}, \cite{burnetas1996optimal}). 
The transportation  inequality
  allows us  to formulate the problem of arriving at {\em efficient}  lower bounds on sample complexity  in the $\mathcal{PI}$  framework as an optimization problem -  a linear program with infinitely many constraints; this also has  an equivalent max-min formulation. We refer to
 the resulting optimization problem as the {\em lower bound problem}. 
 
The advantage of $\mathcal{PI}$  framework  is that it provides a unified approach to tackle a large class of  problems, both in developing efficient lower bounds on the sample complexity of $\delta$-PAC algorithms, as well as in arriving at 
 $\delta$-PAC algorithms with sample complexity that asymptotically (as $\delta \rightarrow 0$) matches  the developed
  lower bounds under certain distributional restrictions on the arms.

To further analyze the lower bound problem,  we assume that each arm distribution belongs to a single parameter exponential family (SPEF).
See, e.g., \cite{cappe2013kullback}, \cite{garivier2016optimal}, \cite{kaufmann2016complexity},  where similar distributional restrictions 
are imposed.  Examples of SPEF include Binomial, Poisson, Gaussian with known variance, distributions. See, \cite{cappe2013kullback} for an elaborate discussion on SPEF distributions.  Any member of SPEF distribution can be uniquely represented by its mean. This  allows us to consider the partition problem in the parameter space (i.e., $\Omega \subset \RR^K$) instead of the distribution space.
This further allows us to highlight the geometrical structure of the lower bound problem in a relatively simple manner.

We solve the lower bound problem  for SPEF distributions, so that  $\Omega \subset \RR^K$. 
Our focus is primarily on  $\Omega = \mathcal{A}_1 \cup \mathcal{A}_2$,  where we consider  the following settings:

\begin{itemize}
\item {\bf Threshold crossing problem:}
For a threshold $u \in \RR$,
$\mathcal{A}_1 = \{\nu \in \RR^K: \max_{i \leq K} \nu_i >u\}$
and 
 $\mathcal{A}_2 = \{\nu  \in \RR^K:  \max_{i \leq K} \nu_i < u\}$,
 we explicitly solve the lower bound problem for each $\mu \in \Omega$. 
  We refer to this as the threshold crossing problem, and point to the elegant asymmetry in the lower
  bounds depending upon whether $\mu \in \mathcal{A}_1$ or $\mu \in \mathcal{A}_2$.
In Appendix~\ref{sec:app:threshold}, we  briefly discuss  how this problem arises naturally in financial portfolio risk measurement involving nested simulations.

\item {\bf Half-space problem:} 
For specified $(a_1, \ldots, a_K, b) \in \RR^{K+1}$, 
  $\mathcal{A}_1=\{\nu \in \RR^K: \sum_{i=1}^K a_i \nu_i > b\}$ 
and  $\mathcal{A}_2=\{\nu \in \RR^K: \sum_{i=1}^K a_i \nu_i < b\}$,
we characterize the solution to the  lower bound problem for each $\mu \in \Omega$.

\item {\bf Convex set problem: }
When $\mathcal{A}_2$ is a closed convex set and $\mathcal{A}_1$ is its complement in $\RR^K$,
we characterize the solution lower bound problem for each $\mu \in \mathcal{A}_1$. 
We also do this for $\mu \in \mathcal{A}_1$ when $\mathcal{A}_1$ is a polytope, and $\mathcal{A}_2$, a complement of $\mathcal{A}_1$, is a union of half-spaces. 
In these settings we highlight the geometric structure of the problem
and propose geometry based simple algorithms to compute the lower bound solutions. 
\end{itemize}

\cite{garivier2016optimal} solve an equivalent optimization problem  in the best arm setting. 
They further use the solution to arrive at an adaptive $\delta$-PAC algorithm
whose stopping rule is based on the generalized likelihood ratio test earlier proposed in
\cite{chernoff1959sequential}. Also, see \cite{albert61}. 
The sample complexity of the proposed algorithm
is shown to  asymptotically match the lower bound solution (as $\delta \rightarrow 0$).
We note that their  algorithm can be adapted to the problems we consider
to again arrive at an adaptive $\delta$-PAC algorithm
whose sample complexity  asymptotically matches the corresponding lower bound.

The partition identification framework was also considered in
\cite{chernoff1959sequential} where $\Omega$ was restricted to be finite. 
\cite{albert61} generalized this work  to allow for $\Omega$ with infinite elements.
The key difference of our paper compared to these references is that 
we work in a $\delta$-PAC framework that provides explicit error guarantees. Their work involves guarantees with constants 
that are not explicitly available.  Further, our focus is on exploiting the geometry of the solution to the lower bound problem to solve it efficiently.
These issues are not considered in \cite{chernoff1959sequential} and \cite{albert61}.

The rest of the paper is organized as follows: In Section~\ref{sec:problem}, we state the transportation inequality from \cite{kaufmann2016complexity} and state the resultant lower bound problem  in $\mathcal{PI}$  framework as an optimization problem. We also 
spell out preliminaries such as the single parameter exponential family distributions and related assumptions in this section.
In Section~\ref{sec:lb}, we characterize the solution to the lower bound problem for various special cases of partition of $\Omega$ into 
disjoint sets $\mathcal{A}_1$ and $\mathcal{A}_2$. %Specifically, we discuss the threshold crossing problem, problems where $\Omega$ is partitioned into two disjoint half-spaces and into a convex set and its complement.
For the threshold crossing problem (Section~\ref{sec:threshold-cross}), we give a closed form expression for the solution to the lower bound problem. For the half-space problem (Section~\ref{sec:half-space}), we give a simple characterization of the solution that allows
for easy numerical evaluation.  Similarly, for the problem where $\Omega$ is partitioned into a convex set and its complement, we derive some useful properties of the solution to the lower bound problem (Sections~\ref{sec:convex}, \ref{sec:non-convex}).
%In Section 4, we characterize the solution to the lower bound problem when $\Omega$ is partitioned into two disjoint half spaces. 
%In Section 5 we consider special cases of partition of $\Omega$ into sets $A$ and $A^c$ (complement of set A), when either $A$ or $A^c$ correspond to a convex set. We arrive at closed form solutions in the  simple settings of two arms when their  distributions are Gaussian with known and constant variance.   
 In Section~\ref{sec:algo}, we propose a $\delta$-PAC algorithm  that in substantial generality achieves the derived lower bounds asymptotically as $\delta$ decreases to zero.
The detailed  proofs are given in the appendices.

\section{Preliminaries and basic optimization problem}
\label{sec:problem}

Recall that $\Omega$ denotes a collection of  vectors $ \nu =(\nu_1, \ldots, \nu_K)$ where each $\nu_i$ is a probability distribution in $\RR$. Further,  $\Omega= \cup_{i=1}^m \mathcal{A}_i$ where the $\mathcal{A}_i$ are disjoint, and thus partition
$\Omega$.

Let  {$KL(\mu_i || \nu_i)= \int \log (\frac{d \mu_i}{d \nu_i}(x)) d \mu_i(x)$} denote the Kullback-Leibler divergence
between distributions $\mu_i$ and $\nu_i$. We further assume that for each $\nu, \tilde{\nu} \in \Omega$,
the components $\nu_i$ and $\tilde{\nu}_i$ for each $i$ are mutually absolutely continuous and the expectation
$KL(\nu_i || \tilde{\nu}_i)$ exists (it may be infinite).
For $p,q \in (0,1)$, let
\[
d(p, q) := p \log \left ( \frac{p}{q} \right ) + (1-p)  \log \left ( \frac{1-p}{1-q} \right ),
\]
that is, $d(p, q) $ denotes the KL-divergence between Bernoulli distributions with mean $p$ and $q$, respectively.
For any set $\mathcal B$, let $\mathcal{B}^c$ denote its complement, $\mathcal {B}^o$ its interior, $\bar{\mathcal B}$ its closure and $\partial \mathcal B$ its boundary.

Under a  $\delta$-PAC algorithm, and for $\mu \in \mathcal{A}_j$, the following transportation inequality
 follows from \cite{kaufmann2016complexity}:
\begin{equation}
\label{eq:lb}
\sum_{i=1}^K E_{\mu} N_i \, KL(\mu_i || \nu_i) \geq d(\delta, 1-\delta) \geq \log \left (\frac{1}{2.4 \delta} \right )
\end{equation}
for any $\nu \in \mathcal{A}_j^c$, where $N_i$ denotes the number of times arm $i$ is pulled
by the algorithm. The assumption that
$KL(\nu_i || \tilde{\nu}_i)$ exists, allows the use of Wald's Lemma in proof of  Lemma 1 in \cite{kaufmann2016complexity}.
 %It is easy to see that $d(\delta, 1-\delta) \sim \log \delta^{-1}$ as $ \delta \rightarrow 0$.
Taking $t_i = E_{\mu} N_i/\log (\frac{1}{2.4 \delta})$, our lower bound on sample complexity problem can be modelled as the  following convex programming  problem, when  $\mu \in \mathcal{A}_j$ (call it $\bf{O1}$):
\begin{eqnarray*}
    \min_{\mathbf{t} = (t_1, \dots, t_K)} & \sum_{i=1}^K t_i &  \label{eqn:generic_constr0} \\
\mbox{  s.t.  } & \inf_{\nu \in {\mathcal{A}_j^c}} \sum_{i=1}^K t_i KL(\mu_i || \nu_i) \geq  1, & \label{eqn:generic_constr} \\
 & t_i \geq 0 \,\,\, \forall i. & 
\end{eqnarray*}

%Here, each $ $ needs to scale by  $\log (\frac{1}{2.4 \delta})$ to get a lower bound.
Letting $w_i= \frac{t_i}{\sum_j t_j}$ and 
$\mathcal{P}_K
\triangleq \{ w \in \RR^k: w_i \geq 0 \, \, \forall i, \sum_{i=1}^K w_i=1 \}$ 
 denote the $K$-dimensional probability simplex, $\bf{O1}$ maybe equivalently stated as
\begin{equation}
\label{eq:lb-problem}
\max_{w \in \mathcal{P}_K} \inf_{\nu \in {\mathcal{A}_j^c}} \sum_{i=1}^K w_i KL(\mu_i || \nu_i).	\tag{Problem \textsf{LB}}
\end{equation}

Let $C^*(\mu)$ be the optimal value of the above problem. The lower bound on the total expected number of samples is then given by $\log (\frac{1}{2.4 \delta}) T^*(\mu)$ where $T^*(\mu) = 1/C^*(\mu)$.

\begin{remark} {\em While 
the optimization problem $\bf{O1}$ is equivalent to Problem \textsf{LB},
one  advantage of the former is that it can be viewed as a linear program with infinitely many constraints, or a semi-infinite linear program (see, e.g., \cite{lopez2007semi}). Then linear programming duality provides a great deal of insight into the solution structure. However, we instead present our analysis using the max-min Problem \textsf{LB}, since   Sion's minimax theorem can be applied on it to directly arrive at the solution.  }
\end{remark}

%\cite{garivier2016optimal} considered  \ref{eq:lb-problem} in the best arm setting. In Section~\ref{sec:lb}, we discuss how the lower bound problem simplifies in other specific settings.  In Section~\ref{sec:algo}, we analyze an asymptotically optimal $\delta$-PAC algorithm in the general $\mathcal{PI}$ setting. 

 %For the problems that we consider, the former appears to be somewhat more convenient to handle, and more amenable to geometric interpretations. As in \cite{garivier2016optimal}, 

%Observe that solving a relaxation to this optimisation problem gives weaker lower bounds.

\noindent {\bf Single Parameter Exponential Families:} In the remaining paper, we consider single parameter exponential family (SPEF) of 
distributions for each arm. For each $1 \leq i\leq K$, let $\rho_i$ denote a reference measure on the real line, and let
\[
\Lambda_i(\eta) \triangleq \log \left (\int_{x \in \RR}\exp(\eta x) d \rho_i(x) \right ).
\]
$\Lambda_i$ is referred to as a cumulant or a log-partition function. 
Further, set
${\mathcal D}_i \triangleq \{ \eta : \Lambda_i(\eta) < \infty \}.$

An SPEF distribution for arm $i$ and $\eta \in {\mathcal D}_i$, $p_{i,\eta}$, has the form
\[
d p_{i,\eta}(x) = \exp(\eta x- \Lambda_i(\eta)) d \rho_i(x).
\]
%We shall assume that ${\mathcal D}_i^o$ the interior of ${\mathcal D}_i$ is a non-empty interval for each $i$.  
Note that
$\Lambda_i$ is ${\mathcal C}^{\infty}$ in ${\mathcal D}_i^o$ (see, e.g.,  2.2.24 \cite{dembo38large}). 
Further,  $\Lambda_i(\eta)$ is a convex function
of $\eta \in {\mathcal D}_i^o$, and if the underlying distribution is non-degenerate,  then it is 
strictly convex. 

%Let 
%\[
%\bar{\eta}_i = \sup_{\eta} \{ \eta \in {\mathcal D}_i\}.
%\]
% We further assume that $A_i(\eta) \rightarrow \infty$ as $\eta \uparrow \bar{\eta}_i$.
 
 Let $\Lambda_i^*$ denote the Legendre-Fenchel transform of $\Lambda_i$, that is, 
$ \Lambda_i^*(\theta) = \sup_{\eta\in {\mathcal D}_i} (\eta \theta - \Lambda_i(\eta)).$

Further, let $\mu_i$ denote the mean under $p_{i,\eta_i}$. 
Then,
$\mu_i  =    \Lambda_i'(\eta_i)$ 
for $\eta_i \in {\mathcal D}_i^o$.
In particular, $\mu_i$ is a strictly increasing function of $\eta_i$,
and there is one to one mapping between the two. 
Below we suppress the notational dependence of $\mu_i$ on $\eta_i$
 and vice-versa.

Let 
${\mathcal U}_i \triangleq \{  \Lambda_i'(\eta_i),  \eta_i  \in {\mathcal D}^o_i \}.$ 
Since  $\Lambda_i'(\eta_i)$ is strictly increasing for   $\eta_i  \in {\mathcal D}^o_i$,  ${\mathcal U}_i$ is an open interval, and sans the boundary cases, denotes the value of means attainable for arm $i$. For $\eta_i \in {\mathcal D}_i^o$, the following are well known and easily checked:
$\eta_i =  {\Lambda_i^*}'(\mu_i)$, and
\begin{eqnarray}
\Lambda^*_i(\mu_i) + \Lambda_i(\eta_i) & = & \mu_i \eta_i.	\label{eq:convex-conj}
\end{eqnarray}
For $\eta_i, \beta_i \in {\mathcal D}_i^o$, It is easily seen that 
\[
KL(p_{i,\eta_i} || p_{i,\beta_i})  = 
\Lambda_i(\beta_i) - \Lambda_i(\eta_i) - \mu_i(\beta_i-\eta_i)
 \] 
 where again $\mu_i= \Lambda_i'(\eta_i)$. 
We denote the above by $K_i(\mu_i | \nu_i)$
with $\nu_i= \Lambda_i'(\beta_i)$
emphasizing that when the two distributions are from the same SPEF, Kullback-Leibler divergence 
only depends on the mean values of the distributions. Using \eqref{eq:convex-conj}, we have
\begin{equation}
\label{eq:kl-exp-fam}
K_i(\mu_i | \nu_i) = \Lambda^*_i(\mu_i) - \Lambda^*_i(\nu_i) - \beta_i(\mu_i - \nu_i),
\end{equation}
where $\beta_i = {\Lambda^*_i}'(\nu_i)$. Again, it can be shown that $\Lambda^*_i$ is ${\mathcal C}^{\infty}$ in ${\mathcal U}_i$ (see,  2.2.24 \cite{dembo38large}), and it is strictly convex if $\Lambda_i$ is strictly convex. Thus, $K_i$ is ${\mathcal C}^{\infty}$  in ${\mathcal U}_i$ with respect to each of its arguments.

In the remaining paper, \ref{eq:lb-problem} refers to
\begin{equation}
\label{eq:lb-problem2}
\max_{w \in \mathcal{P}_K} \inf_{\nu \in {\mathcal{A}_j^c}} \sum_{i=1}^K w_i KL(\mu_i | \nu_i),	
\end{equation}
 each $\mathcal{A}_k$ a subset of $\RR^K$, and again $\mu \in \mathcal{A}_j$.

\begin{remark} \label{rem:33} {\em
For any  $w \in \mathcal{P}_K$, the sub-problem  in \textsf{LB}, $\inf_{\nu \in {\mathcal{A}_j^c}} \sum_{i=1}^K w_i KL(\mu_i | \nu_i)$,  has an elegant geometrical interpretation.
For $c>0$, consider the sublevel set
\[
S(\mu,w, c) \triangleq \left\lbrace \nu : \sum_{i=1}^K w_i KL(\mu_i | \nu_i) \leq c \right\rbrace.
\]
Then, for element-wise strictly positive $w$,  $S(\mu,w, 0)= \{\mu\}$. The set $S(\mu,w, c)$ for 
some $c>0$ intersects with $\mathcal{A}_j^c$. Further, the set shrinks as $c$ reduces. We are looking for the smallest $c=c^*$ 
 for which $S(\mu,w, c)$ has a non-empty intersection with $\bar{\mathcal{A}}_j^c$. Equivalently, 
 we are looking for the first $c >0$ for which the set grows beyond the interior of $\mathcal{A}_j$ and intersects with $\bar{\mathcal{A}}_j^c$.  
Thus, 
\[
\inf_{\nu \in \bar{\mathcal{A}}_j^c} \sum_{i=1}^K w_i KL(\mu_i | \nu_i) = \inf\{c: S(\mu,w, c) \cap \bar{\mathcal{A}}_j^c \ne \emptyset\}.
\] 
Figure~\ref{fig:1} demonstrates this in a simple setting of two arms. Arm $i$, for $i=1,2$, is Gaussian distributed with mean $\mu_i=0$ and variance 1.
$\mathcal{A}_1 = \{(\nu_1, \nu_2) \in \RR^2: a_1 \nu_1 + a_2 \nu_2 <b \}$ for $a_1, a_2, b >0$, and it contains $(\mu_1, \mu_2)=(0,0)$. 
$KL(\mu_i |\nu_i) = \frac{\nu_i^2}{2}$ for $i=1,2$. The convex set $S(\mu,w, c)$ is tangential to $a_1 \nu_1 + a_2 \nu_2 =b$ at $c=c^*$.
}
\end{remark}

\begin{figure} 
\begin{center}
\includegraphics[height=0.2\textheight]{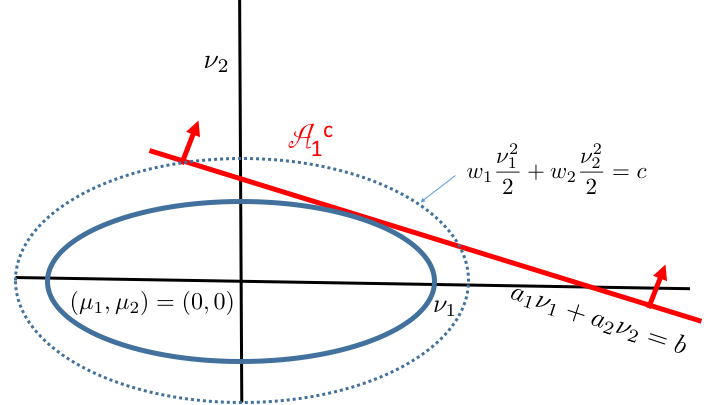}
\end{center}
\caption{Geometrical view of the sub-problem  in \textsf{LB}. Two arms with standard Gaussian distribution. $\mathcal{A}_1$ is a half-space in $\RR^2$,
and $\mu=(0,0) \in \mathcal{A}_1$.}
\label{fig:1}
\end{figure}

\noindent {\bf Conditions on  KL-Divergence:} Since $\Lambda^*_i$ is a convex function, we have that $K_i$ is convex in its first argument.  Since $K_i(\mu_i| \nu_i)$ decreases with $\nu_i$ 
for $\nu_i \leq \mu_i$, and it increases with $\nu_i$ for $\nu_i \geq \mu_i$, it is a quasi-convex function of $\nu_i$.  
For many known SPEFs, for instance, Bernoulli, Poisson and Gaussian with known variance, the KL-divergence is also strictly convex in the second argument. But there are also SPEFs for which it is not convex in the second argument, for e.g., Rayleigh, centered Laplacian and negative Binomial (with number of failures fixed).

Our analysis is substantially simplified when $\sum_{i=1}^K w_i KL(\mu_i | \nu_i)$ is a strictly convex function of $\nu$. 
This is ensured by Assumption~\ref{ass:kl-cvx}:
\begin{assumption} 
 \label{ass:kl-cvx}
 For each $i$, ${\mathcal D}_i^o$ is non-empty and $\Lambda_i(\eta_i)$ is strictly convex for
 $\eta_i \in {\mathcal D}_i^o$. Further,  for any $\mu_i \in {\mathcal U}_i$,  $K_i(\mu_i | \nu_i)$ is a strictly convex function of $\nu_i \in {\mathcal U}_i.$
\end{assumption} 

We also make the following assumption to ease some technicalities.  This assumption holds for most distributions encountered in practice.
\begin{assumption} 
 \label{ass:kl-range}  
For any $\mu_i \in {\mathcal U}_i$, $K_i(\mu_i | \nu_i) \rightarrow \infty$ as $\nu_i \rightarrow \partial {\mathcal U}_i$ with $\nu_i$ taking values in ${\mathcal U}_i$.
\end{assumption}
 
 %This is equivalent to assuming that $\Lambda_i(\beta_i) \rightarrow \infty$ as $\beta_i \rightarrow \partial {\mathcal D}_i$ with $\beta_i$ taking values in ${\mathcal D}_i^o$.

\section{Lower bounds for some $\mathcal{PI}$ problems}
\label{sec:lb}

In this section we explore the structure of \ref{eq:lb-problem} in a number of settings. 
In each setting we specify $\mathcal{A}_1$ and  $\mathcal{A}_2$, and 
$\Omega$ is set to  $\mathcal{A}_1 \cup \mathcal{A}_2$. 
%Except for the threshold crossing problem, in all other settings $\mu \in \mathcal{A}_1$. 

\subsection{Threshold crossing problem }
\label{sec:threshold-cross}
Let
${\mathcal U} = \bigtimes_{i=1}^K  {\mathcal U}_i$, 
 $\mathcal{A}_1= \{\mu \in {\mathcal U}: \max_{i \leq K} \mu_i >u\}$,
 and
 $\mathcal{A}_2 = \{\mu \in {\mathcal U}:  \max_{i \leq K} \mu_i < u\}$.

Theorem \ref{the:theorem1} below points to an interesting asymmetry that arises in the lower bound problem
associated with  threshold crossing as a function $\mu \in \Omega$.
\begin{theorem} \label{the:theorem1}
Suppose that  $(u, \ldots, u) \in {\mathcal U}$. Consider $\mu \in \mathcal{A}_1$ such that, w.l.o.g., for some $i \geq 1$,
\[ \mu_j > u \mbox{ for }  j =2,\ldots, i, \; \mu_j < u \; \mbox{ for } i+1 \leq j \leq K,
%K_1(\mu_1 | u)> K_2(\mu_2 | u) \geq K_3(\mu_3 | u) \ldots \geq K_i(\mu_i | u)> 0 > K_{i+1}(\mu_{i+1} | u) \geq ....\geq K_K(\mu_K | u).
\]
and
$K_1(\mu_1 | u) > K_j(\mu_j | u) \; \mbox{ for } j=1, \ldots, i.$
Then, \ref{eq:lb-problem} has a unique solution given by
 \begin{equation} \label{eqn:thresh1}
 w^*_1 = 1, \mbox{ and } w^*_j=0 \mbox{ for } j=2, \ldots, K.
\end{equation}
The
lower bound on expected total number of samples generated equals
$\frac{1}{K_1(\mu_1 | u)} \times \log (\frac{1}{2.4 \delta}).$

When $\mu \in \mathcal{A}_2$, \ref{eq:lb-problem} has a unique solution given by
\begin{equation} \label{eqn:thresh222}
w^*_j \propto  1/K_j(\mu_j | u), \,\, 1 \leq j \leq K,
\end{equation}
 and the
lower bound on expected total number of samples generated equals
$\sum_{j=1}^K \frac{1}{K_j(\mu_j | u)} \times \log (\frac{1}{2.4 \delta}).$
\end{theorem}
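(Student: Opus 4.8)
The plan is to compute the inner infimum $\inf_{\nu \in \mathcal{A}_j^c}\sum_{k=1}^K w_k K_k(\mu_k\,|\,\nu_k)$ in closed form as a function of $w$, and then carry out the outer maximization over $\mathcal{P}_K$ directly; no minimax swap is needed, since the inner problem is solved explicitly. The only structural facts required are that $K_k(\mu_k\,|\,\cdot)$ is quasi-convex in its second argument with minimum $0$ at $\nu_k=\mu_k$ (decreasing for $\nu_k\le\mu_k$, increasing for $\nu_k\ge\mu_k$), together with the hypothesis $(u,\dots,u)\in\mathcal{U}$, which guarantees $u\in\mathcal{U}_k$ so that each $K_k(\mu_k\,|\,u)$ is finite. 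In both cases the optimal perturbation into $\mathcal{A}_j^c$ turns out to be coordinatewise onto the threshold, $\nu_k=u$, for the relevant coordinates, while the others are left at $\mu_k$ at zero cost.

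For $\mu\in\mathcal{A}_1$ the relevant constraint set is $\{\nu:\nu_k\le u \text{ for all }k\}$, a \emph{conjunctive} condition: \emph{every} coordinate must be brought down to $u$. For an index $j\le i$ with $\mu_j>u$, monotonicity forces the cheapest admissible choice $\nu_j=u$, contributing $K_j(\mu_j\,|\,u)$, while for $j>i$ with $\mu_j<u$ we keep $\nu_j=\mu_j$ at zero cost. Hence the inner infimum equals the linear form $\sum_{j=1}^i w_j K_j(\mu_j\,|\,u)$. Maximizing a linear functional over $\mathcal{P}_K$ concentrates all mass on the largest coefficient; since $K_1(\mu_1\,|\,u)>K_j(\mu_j\,|\,u)$ for $2\le j\le i$ and the coefficients vanish for $j>i$, the unique maximizer is $w^*_1=1$, $w^*_j=0$ otherwise, with $C^*(\mu)=K_1(\mu_1\,|\,u)$ and $T^*(\mu)=1/K_1(\mu_1\,|\,u)$.

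For $\mu\in\mathcal{A}_2$ the constraint set is $\{\nu:\max_k\nu_k\ge u\}=\bigcup_{j}\{\nu:\nu_j\ge u\}$, a \emph{disjunctive} condition: it suffices to push a \emph{single} coordinate up to $u$. Writing the infimum over the union as the minimum over $j$ of the infima over $\{\nu_j\ge u\}$, and using $\mu_j<u$ with monotonicity (optimal to set $\nu_j=u$, all other coordinates at $\mu_k$), the inner infimum becomes the bottleneck expression $\min_{1\le j\le K} w_j K_j(\mu_j\,|\,u)$. The outer problem is then the max-min problem $\max_{w\in\mathcal{P}_K}\min_j w_j K_j(\mu_j\,|\,u)$. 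Writing $c_j:=K_j(\mu_j\,|\,u)>0$ (strictly positive since $\mu_j<u$), I would prove both optimality and uniqueness by a balancing argument: at any maximizer with value $\lambda$ one has $w_j c_j\ge\lambda$ for all $j$, so $1=\sum_j w_j\ge\lambda\sum_j 1/c_j$; the equalizing choice $w_j=\lambda/c_j$ with $\lambda=(\sum_k 1/c_k)^{-1}$ is feasible and attains this bound, forcing equality throughout and hence $w_j c_j=\lambda$ for every $j$. This pins down $w^*_j\propto 1/K_j(\mu_j\,|\,u)$ uniquely, gives $C^*(\mu)=(\sum_j 1/K_j(\mu_j\,|\,u))^{-1}$, and therefore $T^*(\mu)=\sum_j 1/K_j(\mu_j\,|\,u)$.

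The computations are routine once the inner infimum is in hand; the real content is the \emph{asymmetry} forced by the geometry of the complement. The step to argue carefully is the reduction of the inner infimum, namely that the optimal perturbation is coordinatewise onto $\nu_k=u$, which rests on quasi-convexity and monotonicity of $K_k(\mu_k\,|\,\cdot)$ and on $u$ being attainable. Everything downstream is either a linear program over the simplex (the conjunctive $\mathcal{A}_1$ case, yielding a single-arm vertex) or the bottleneck balancing above (the disjunctive $\mathcal{A}_2$ case, yielding a spread-out solution). I expect the bottleneck uniqueness to be the only slightly delicate point, and it is disposed of cleanly by the summation identity just described.
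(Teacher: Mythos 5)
Your proposal is correct and follows essentially the same route as the paper: reduce the inner infimum by pushing the relevant coordinates onto the threshold $u$ (all arms above $u$ in the conjunctive $\mathcal{A}_1$ case, a single arm in the disjunctive $\mathcal{A}_2$ case), then solve the resulting linear program over the simplex, respectively the bottleneck problem $\max_w \min_j w_j K_j(\mu_j\,|\,u)$. Your explicit balancing argument for uniqueness in the second case is a welcome detail that the paper merely asserts, but it is not a different method.
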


Intuitive explanation for the lower bound asymmetry in the two cases
$\mu \in \mathcal{A}_1$ and $\mu \in \mathcal{A}_2$ is as follows:
When $\mu \in \mathcal{A}_1$,
any algorithm has to establish with at least $1-\delta$ probability 
that there exists at least one arm above $u$. The lower bound is then achieved by focussing on the
arm that is most separated from $u$. That is, 
 arm $i$ with $\mu_i>u$ and  with the largest
value of $K_i(\mu_i | u)$. 
 On the other hand, when $\mu \in \mathcal{A}_2$,  any  algorithm would need
to rule that each and every arm has mean less than $u$, again
while controlling the probability of error for each arm.  

In Appendix~\ref{sec:app:threshold}, Example~\ref{ex:1}, we discuss
how the threshold crossing problem arises naturally  in nested simulation used in financial portfolio risk measurement.

\subsection{Half-space problem }
\label{sec:half-space}
 We consider the problem of identifying the half-space to which the mean vector belongs. 
Set
$\mathcal{A}_1 = \{\nu \in    \RR^K \cap {\cal U}: \sum_{k=1}^K a_{k}\nu_k < b \}$ and 
$\mathcal{A}_2 = \{\nu \in  \RR^K \cap {\cal U}: \sum_{k=1}^K a_{k}\nu_k > b \}.$
W.l.o.g. each $a_i$ can be taken to be non-zero and $b>0$.
\ref{eq:lb-problem} may be formulated as:
 For $\mu \in \mathcal{A}_1$, and non-empty $\mathcal{A}_2$, 
 \begin{equation}
 \label{eq:half-space-lb}
\max_{w \in \mathcal{P}_K} \inf_{\nu \in \bar{\mathcal{A}}_2} \sum_{j=1}^K w_j K_j(\mu_j | \nu_j).	
\end{equation}
%
%Recall that  ${\mathcal U}_i= \{ \nu_i: K_i(\mu_i | \nu_i) < \infty\}^o$
%and
%${\mathcal U} = \times_{i=1}^K {\mathcal U}_i.$

\begin{theorem} \label{thm:Theorem10111}
Under Assumptions~\ref{ass:kl-cvx}, \ref{ass:kl-range}, and that $\mathcal{A}_2$ is non-empty,
there is a unique optimal solution $(w^*, \nu^*)$ to \ref{eq:lb-problem}.
Further,
\begin{equation} \label{eqn:hs00111}
K_i( \mu_i | {\nu}^*_i) = K_1( \mu_1 | {\nu}^*_1)	\quad \forall i,
\end{equation}
\begin{equation} \label{eqn:hs00211}
\sum_{k=1}^K a_{k}{\nu}^*_k = b,
\end{equation}
\begin{equation} \label{eqn:hs00311}
 {\nu}^*_i> \mu_i \mbox{ if }  a_i >0,
\mbox{ and  }
 {\nu}^*_i < \mu_i \mbox{  if } a_i <0.
\end{equation}
Relations (\ref{eqn:hs00111}), (\ref{eqn:hs00211}) and (\ref{eqn:hs00311}) uniquely
specify ${\nu}^* \in {\mathcal U}$.
Moreover,
\begin{equation} \label{eqn:hs00411}
\frac{w^*_i}{a_i}K'_i(\mu_i | {\nu}^*_i) = \frac{w^*_1}{a_1}K'_1(\mu_1 | {\nu}^*_1) \quad \forall i,
\end{equation}
where the derivatives are with respect to the second argument.
\end{theorem}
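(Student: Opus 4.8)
The plan is to produce a saddle point of the max--min objective, read off the four stated relations from the two one-sided optimality conditions, and then establish uniqueness. Write $f(w,\nu) = \sum_{j=1}^K w_j K_j(\mu_j | \nu_j)$. For fixed $w$ this is convex in $\nu$ (strictly convex when $w$ is element-wise positive, by Assumption~\ref{ass:kl-cvx}), and for fixed $\nu$ it is linear, hence concave and continuous, in $w$. Since $\mathcal{P}_K$ is compact and convex and $\bar{\mathcal{A}}_2$ is convex, Sion's minimax theorem applies directly and gives $\max_{w}\inf_{\nu} f = \inf_{\nu}\max_{w} f$. Assumption~\ref{ass:kl-range} makes $f(w,\cdot)$ coercive on $\mathcal U$ and forces any inner minimizer to remain in $\mathcal U$, away from $\partial\mathcal U$, so the inner infimum is attained there; together with compactness of $\mathcal{P}_K$ this yields a saddle point $(w^*,\nu^*)$, where $\nu^*$ minimizes $f(w^*,\cdot)$ over $\bar{\mathcal{A}}_2$ and $w^*$ maximizes $f(\cdot,\nu^*)$ over $\mathcal{P}_K$.

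The first key step is to show that $w^*$ is element-wise strictly positive, which is what drives \eqref{eqn:hs00111}. The observation is that if some $w_i = 0$, then, because $a_i \neq 0$, one may set $\nu_j = \mu_j$ for all $j \neq i$ and choose $\nu_i$ so that $\sum_k a_k \nu_k = b$; this point lies in $\bar{\mathcal{A}}_2$ and gives $f(w,\nu) = 0$, so $g(w) := \inf_{\nu \in \bar{\mathcal{A}}_2} f(w,\nu) = 0$ on the boundary of the simplex. For strictly positive $w$, however, the minimizer lies on the separating hyperplane at strictly positive KL-distance from $\mu$ (since $\mu \notin \bar{\mathcal{A}}_2$ and each $K_j$ vanishes only at $\nu_j=\mu_j$), so $g(w) > 0$. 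Hence the maximizer $w^*$ is interior. Now $w^*$ maximizes the \emph{linear} map $w \mapsto \sum_j w_j K_j(\mu_j | \nu_j^*)$ over the simplex at an interior point, and a linear function attains its simplex maximum at an interior point only if all of its coefficients coincide; this is precisely \eqref{eqn:hs00111}.

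Next I would extract the inner optimality conditions at $\nu^*$. Since $\mu$ is the unique unconstrained minimizer of $f(w^*,\cdot)$ but $\mu \notin \bar{\mathcal{A}}_2$, the constraint must be active, giving \eqref{eqn:hs00211}; coercivity keeps $\nu^*$ in the region where each $K_j$ is smooth. The KKT stationarity condition for the single active constraint reads $w_i^* K_i'(\mu_i | \nu_i^*) = \lambda a_i$ for all $i$ with a multiplier $\lambda$, which rearranges to \eqref{eqn:hs00411}. Complementary slackness with an active constraint at an infeasible unconstrained optimum forces $\lambda > 0$; combined with $w_i^* > 0$ and the fact that $\sgn(K_i'(\mu_i | \nu_i^*)) = \sgn(\nu_i^* - \mu_i)$ (because $K_i(\mu_i|\cdot)$ decreases for $\nu_i<\mu_i$ and increases for $\nu_i>\mu_i$), this yields the sign relation \eqref{eqn:hs00311}.

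Finally I would prove uniqueness. Relations \eqref{eqn:hs00111}--\eqref{eqn:hs00311} pin down $\nu^*$: writing the common value in \eqref{eqn:hs00111} as $\kappa$, strict monotonicity of $K_i(\mu_i | \cdot)$ on each side of $\mu_i$ together with the sign condition makes each $\nu_i$ a well-defined function $\nu_i(\kappa)$ on the prescribed side of $\mu_i$, and one checks that $a_i \nu_i(\kappa)$ is strictly increasing in $\kappa$ in every case, so $\kappa \mapsto \sum_k a_k \nu_k(\kappa)$ increases strictly from a value below $b$ at $\kappa=0$; hence a unique $\kappa$, and thus a unique $\nu^*$, satisfies \eqref{eqn:hs00211}. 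Given $\nu^*$, relation \eqref{eqn:hs00411} together with $\sum_i w_i^* = 1$ and $K_i'(\mu_i | \nu_i^*) \neq 0$ (guaranteed by \eqref{eqn:hs00311}) determines $w^*$ uniquely. The step I expect to be most delicate is the saddle-point stage --- justifying the minimax interchange and attainment despite the unboundedness of $\bar{\mathcal{A}}_2$ --- together with the strict-positivity argument for $w^*$, on which relation \eqref{eqn:hs00111} entirely rests.
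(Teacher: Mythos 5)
Your overall architecture (Sion saddle point $\to$ read off \eqref{eqn:hs00111}--\eqref{eqn:hs00411} from the two one-sided optimality conditions $\to$ uniqueness via monotonicity) is coherent, and it genuinely differs from the paper, which never forms a saddle point: the paper first constructs $\nu^*$ directly from \eqref{eqn:hs00111}--\eqref{eqn:hs00311} (Lemma~\ref{lem:lemma101}, via the strictly increasing map $\nu_1 \mapsto \sum_i a_i K_i^{-1}(K_1(\mu_1|\nu_1))$), defines $w^*$ by \eqref{eqn:hs00411}, verifies by KKT that $\nu^*$ minimizes $\sum_j w^*_j K_j(\mu_j|\cdot)$ over $\bar{\mathcal A}_2$, and then disposes of every competitor in two lines using \eqref{eqn:hs00111}: $\inf_{\nu}\sum_i \tilde w_i K_i(\mu_i|\nu_i) \le \sum_i \tilde w_i K_i(\mu_i|\nu^*_i) \le K_1(\mu_1|\nu^*_1)$. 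That route avoids both the minimax interchange over the non-compact set $\bar{\mathcal A}_2$ and the question of where $w^*$ sits in the simplex.

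The genuine gap in your argument is the step you correctly identify as the one \eqref{eqn:hs00111} ``entirely rests'' on: the claim that $g(w)=0$ whenever some $w_i=0$. This is false when the mean ranges ${\mathcal U}_j$ are bounded. Your construction needs a point of $\bar{\mathcal A}_2$ with $\nu_j=\mu_j$ for all $j\neq i$, i.e.\ it needs $a_i(\hat u_i-\mu_i)\ge b-\sum_k a_k\mu_k$, which is strictly stronger than the non-emptiness condition $\sum_k a_k\hat u_k>b$ of Lemma~\ref{lem:lemma101}. Concretely, take two Bernoulli arms, $\mu=(0.5,0.5)$, $a=(1,1)$, $b=1.6$: $\mathcal A_2$ is non-empty, but with $w=(0,1)$ every $\nu\in\bar{\mathcal A}_2$ has $\nu_2\ge 0.6$, so $g((0,1))=K_2(0.5\,|\,0.6)>0$. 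Thus $g$ does not vanish on the boundary of the simplex, and your stated reason for $w^*$ being interior collapses; without interiority, maximizing the linear form $w\mapsto\sum_j w_j K_j(\mu_j|\nu^*_j)$ only tells you that $w^*$ is supported on $\argmax_j K_j(\mu_j|\nu^*_j)$, not that all the $K_j(\mu_j|\nu^*_j)$ coincide (indeed, in the general convex-set version, Theorem~\ref{thm:min-max}, some coordinates genuinely get $w^*_i=0$). The conclusion $w^*>0$ is nevertheless true for a half-space with all $a_i\neq 0$, but it has to come from the structure of the constraint --- e.g.\ from the paper's explicit construction, where \eqref{eqn:hs00311} forces $K_i'(\mu_i|\nu^*_i)/a_i>0$ and \eqref{eqn:hs00411} then yields strictly positive weights --- not from the value of $g$ on the simplex boundary. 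A secondary, smaller issue: existence of the saddle point itself needs care, since $f(w,\cdot)$ is not coercive when $w$ has zero entries and $\bar{\mathcal A}_2$ is unbounded; the paper's Theorem~\ref{thm:min-max} handles the analogous interchange by truncating to balls $\mathcal B_n$ and passing to the limit by uniform convergence, and for the half-space case it simply bypasses the issue by direct verification. Your uniqueness argument for $\nu^*$ and for $w^*$ given $\nu^*$ matches the paper's Lemma~\ref{lem:lemma101} and is fine.
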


The proof details are given in the appendix. Ignoring technicalities, the intuition 
for (\ref{eqn:hs00111}) follows from Sion's  Minimax Theorem, which, loosely speaking, implies that
(\ref{eq:half-space-lb}) equals
\[
 \inf_{\substack{\nu \in  \RR^K \cap {\cal U}: \\
 \sum_{k=1}^K a_{k}\nu_k \geq b}} \, \max_{w \in \mathcal{P}_K}
 \sum_{j=1}^K w_j K_j(\mu_j | \nu_j)=
  \inf_{\substack{\nu \in  \RR^K \cap {\cal U}: \\ \sum_{k=1}^K a_{k}\nu_k \geq b}} \, \max_j K_j(\mu_j | \nu_j).
  \]
  Relations (\ref{eqn:hs00111}), (\ref{eqn:hs00211}) and (\ref{eqn:hs00311}) then
  follow from KKT conditions applied to RHS above. Uniqueness of $\nu^*$ follows as $\max_j K_j(\mu_j | \nu_j)$
  is a strictly convex function of $\nu$.    Equation~(\ref{eqn:hs00411}) corresponds to the slope matching that occurs
  as the boundary of the sub-level set associated with $w^*$ (see Remark~\ref{rem:33}) is tangential to the hyperplane
  $\sum_{k=1}^K a_{k}\nu_k = b$.

 \subsection{$\mathcal{A}_2$ is a convex set}
 \label{sec:convex}
 To avoid undue technicalities, assume that $\Omega \subset {\mathcal U}$.
 Suppose that $\mathcal{A}_2 $ is a non-empty closed convex set and $\mu \in \mathcal{A}_1$. 
 Let the associated lower bound problem be denoted by \ref{eq:conv-set-lb}.
\begin{equation}
 \label{eq:conv-set-lb}
\max_{w \in \mathcal{P}_K} \inf_{\nu \in \mathcal{A}_2} \sum_{j=1}^K w_j K_j(\mu_j | \nu_j).	\tag{Problem \textsf{CVX}}
\end{equation}
The solution to \ref{eq:conv-set-lb} and each of its sub-problems  
$\inf_{\nu \in \mathcal{A}_2} \sum_{j=1}^K w_j K_j(\mu_j | \nu_j)$ is finite. This follows as 
for each feasible $w \in \mathcal{P}_K$, and for some $\nu^{(0)} \in \mathcal{A}_2$,
\[
\inf_{\nu \in \mathcal{A}_2} \sum_{j=1}^K w_j K_j(\mu_j | \nu_j)
\leq \sum_{j=1}^K w_j K_j(\mu_j | \nu^{(0)}_j) < \max_j K_j(\mu_j | \nu^{(0)}_j).
\]
Let $C^*$ denote the optimal value for \ref{eq:conv-set-lb}.
Under Assumption~\ref{ass:kl-cvx},  $\sum_{j=1}^K w_j K_j(\mu_j | \cdot)$ is strictly convex
and there is a unique $\nu \in \partial \mathcal{A}_2$  that achieves the minimum in the sub-problem
$\inf_{\nu \in \mathcal{A}_2} \sum_{j=1}^K w_j K_j(\mu_j | \nu_j)$.
Let ${\nu}(w)$ denote this unique solution for any $w \in \mathcal{P}_K$. 
Lemma~\ref{lem:unique-vu*}  below shows that for every optimal solution  to \ref{eq:conv-set-lb}, the same $\nu$ achieves the minimum in the above sub-problem. 
 \begin{lemma}
\label{lem:unique-vu*} Under Assumption~\ref{ass:kl-cvx},
for any $w^*, s^*$ that are optimal for \ref{eq:conv-set-lb},  $\nu(w^*) = \nu(s^*)$.
\end{lemma}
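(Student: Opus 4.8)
The plan is to combine strong duality with the uniqueness of the inner minimizer. Write $F(w,\nu) = \sum_{j=1}^K w_j K_j(\mu_j \mid \nu_j)$ for $w \in \mathcal{P}_K$ and $\nu \in \mathcal{A}_2$, so that \ref{eq:conv-set-lb} reads $C^* = \max_{w} \inf_{\nu} F(w,\nu)$ and, for each $w$, the inner infimum is attained at the unique point $\nu(w)$. The map $F$ is linear, hence concave and continuous, in $w$ over the compact convex simplex $\mathcal{P}_K$, and, by Assumption~\ref{ass:kl-cvx}, convex and continuous (hence lower semicontinuous) in $\nu$ over the convex set $\mathcal{A}_2 \subseteq \mathcal{U}$. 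First I would invoke Sion's minimax theorem to obtain strong duality,
\[
C^* \;=\; \max_{w \in \mathcal{P}_K} \inf_{\nu \in \mathcal{A}_2} F(w,\nu) \;=\; \inf_{\nu \in \mathcal{A}_2} \max_{w \in \mathcal{P}_K} F(w,\nu) \;=\; \inf_{\nu \in \mathcal{A}_2} g(\nu),
\]
where $g(\nu) := \max_{w \in \mathcal{P}_K} F(w,\nu) = \max_{j} K_j(\mu_j \mid \nu_j)$, the last equality holding because maximizing a linear function over the simplex places all weight on a largest coordinate.

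Next I would show that the outer infimum over $\nu$ is attained at some $\nu^\star \in \mathcal{A}_2$ with $g(\nu^\star) = C^*$. The function $g$ is continuous and convex, and by Assumption~\ref{ass:kl-range} each $K_j(\mu_j \mid \cdot)$ tends to $\infty$ as $\nu_j$ approaches $\partial \mathcal{U}_j$; hence the sublevel sets $\{\nu \in \mathcal{A}_2 : g(\nu) \le c\}$ are closed and bounded, i.e.\ compact, so $g$ is coercive on the closed set $\mathcal{A}_2$ and attains its minimum.

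The core of the argument is then a short squeeze. Fix any optimal $w^*$ for \ref{eq:conv-set-lb}; since $w^*$ is a maximizer, $\inf_{\nu} F(w^*,\nu) = C^*$. Using $w^* \in \mathcal{P}_K$ together with the definitions of $\nu^\star$ and $g$,
\[
C^* \;=\; \inf_{\nu \in \mathcal{A}_2} F(w^*,\nu) \;\le\; F(w^*, \nu^\star) \;\le\; g(\nu^\star) \;=\; C^*,
\]
so every inequality is an equality. In particular $F(w^*,\nu^\star) = \inf_{\nu} F(w^*,\nu)$, meaning that $\nu^\star$ attains the inner infimum associated with $w^*$. By the uniqueness of that minimizer (Assumption~\ref{ass:kl-cvx}), $\nu^\star = \nu(w^*)$. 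Since $\nu^\star$ is fixed independently of $w^*$, this identity holds for every optimal weight vector; applying it to both $w^*$ and $s^*$ gives $\nu(w^*) = \nu^\star = \nu(s^*)$, as claimed.

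I expect the main obstacle to lie in the two technical verifications underpinning the squeeze rather than in the squeeze itself: checking the hypotheses of Sion's theorem (in particular lower semicontinuity of $F(w,\cdot)$ on $\mathcal{A}_2$ and compactness of $\mathcal{P}_K$) to secure strong duality, and, more delicately, establishing attainment of the outer minimizer $\nu^\star$ when $\mathcal{A}_2$ is unbounded, where one must lean on the coercivity supplied by Assumption~\ref{ass:kl-range}. One subtlety to treat with care is that an optimal $w^*$ may have zero coordinates, so $F(w^*,\cdot)$ need not be strictly convex on all of $\mathcal{U}$; here I would simply invoke the uniqueness of $\nu(w)$ already asserted for every $w \in \mathcal{P}_K$ in the text preceding the lemma.
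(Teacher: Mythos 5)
Your proof is correct, but it takes a genuinely different route from the paper's. The paper argues entirely on the $w$-side: since $r(w)=\inf_{\nu\in\mathcal{A}_2}\sum_j w_j K_j(\mu_j\mid\nu_j)$ is concave, the mixture $z=\alpha w^*+(1-\alpha)s^*$ is also optimal; evaluating $C^*=\sum_j z_j K_j(\mu_j\mid\nu_j(z))$ as the convex combination of $\sum_j w^*_j K_j(\mu_j\mid\nu_j(z))\ge C^*$ and $\sum_j s^*_j K_j(\mu_j\mid\nu_j(z))\ge C^*$, with either inequality strict unless $\nu(z)$ coincides with $\nu(w^*)$ (resp.\ $\nu(s^*)$), forces $\nu(w^*)=\nu(z)=\nu(s^*)$. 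This is elementary and needs neither a minimax theorem nor attainment of the outer infimum. Your argument instead invokes Sion's theorem (legitimately -- $\mathcal{P}_K$ is compact, so truncating $\mathcal{A}_2$ as the paper later does for Theorem~\ref{thm:min-max11} is not required), uses the coercivity from Assumption~\ref{ass:kl-range} to produce an attained minimizer $\nu^\star$ of $\max_j K_j(\mu_j\mid\cdot)$, and squeezes to show $\nu^\star=\nu(w^*)$ for every optimal $w^*$. The payoff is that you get, essentially for free, the identification of the common point with the unique solution of $\inf_{\nu\in\mathcal{A}_2}\max_i K_i(\mu_i\mid\nu_i)$, i.e.\ the first assertion of Theorem~\ref{thm:min-max11}; the cost is heavier machinery for a statement the paper dispatches in a few lines, and a dependence on existence of $\nu^\star$ that the paper's proof avoids. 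Both proofs lean on the same unstated point, which you rightly flag: when $w^*$ has zero coordinates, $\sum_j w^*_j K_j(\mu_j\mid\cdot)$ is not strictly convex, so uniqueness of $\nu(w^*)$ (used in your last step and in the paper's strict inequalities alike) rests on the assertion preceding the lemma rather than on strict convexity of the weighted sum itself.
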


Let $\nu^*$ be the unique value of $\nu$ which achieves the minimum in the sub-problem for every optimal solution. 
In Theorem~\ref{thm:min-max}, we provide an alternate characterization of $\nu^*$, as well as 
a characterization of the solution of \ref{eq:conv-set-lb}.

Some notation is needed to  state Theorem~\ref{thm:min-max}. 
For any index set $\mathcal{J} \subseteq [K]$ and vector $\nu \in \RR^K$, let $\nu_{\mathcal J}$ denote the projection of the vector $\nu$ on to the lower dimensional subspace with coordinate set given by $\mathcal{J}$. %(For simplicity, we will not re-index the coordinates of the subspace but continue to use $\mathcal{I}$ to denote its indices.)
Similarly, for any set $\mathcal B \subseteq  \RR^K$, let $\mathcal B_{\mathcal J}$ denote its projection onto the subspace restricted to the coordinate set $\mathcal J$, i.e, $\mathcal B_{\mathcal J} = \{\nu_{\mathcal J}: \nu \in \mathcal B \}$. Note that if $\mathcal B$ is convex, then $\mathcal B_{\mathcal J}$ is also convex. If $\mathcal B$ is the c-sublevel set of a convex function $f$, then  $$\mathcal B_{\mathcal J} = \{\nu_{\mathcal J}: f(\nu_{\mathcal J}, \nu_{\mathcal{J}^c}) \leq c \text{ for some } \nu_{\mathcal{J}^c} \in \RR^{\abs{\mathcal{J}^c}} \} = \{\nu_{\mathcal J}: \inf_{ \nu_{\mathcal{J}^c} \in \RR^{\abs{\mathcal{J}^c}}}f(\nu_{\mathcal J}, \nu_{\mathcal{J}^c}) \leq c \}.$$
In other words,  $\mathcal B_{\mathcal J}$ is the c-sublevel set of the function $h_{\mathcal J} := \inf_{ \nu_{\mathcal{J}^c} \in \RR^{\abs{\mathcal{J}^c}}}f(\nu_{\mathcal J}, \nu_{\mathcal{J}^c})$.

\begin{theorem}
\label{thm:min-max11}
Suppose that $\mu \in {\mathcal A}_1$, $\mathcal{A}_2$ is non-empty, $\Omega \subset {\mathcal U}$, and Assumptions  1 and 2 hold. Then, for any optimal solution $(w^*,\nu^*)$  to \ref{eq:conv-set-lb}, the  $\nu^*$ uniquely solves the  min-max problem
\begin{eqnarray}
\label{eq:min-max11} 
\inf_{\nu \in \mathcal{A}_2} \max_i  K_i(\mu_i | \nu_i).
\end{eqnarray}

\noindent Further, the following are necessary and sufficient conditions for such an  
$(w^*,\nu^*)$. Let $\mathcal{I} = \argmax_i K_i(\mu_i | \nu^*_i)$. Then,
\begin{enumerate}
%[label=(\alph*)]
\item \label{item:zero11} $w^*_i = 0 \quad \forall i \in \mathcal{I}^c,$
%\item \label{item:opt-val} $\sum_{i \in \mathcal I} w^*_i = 1/C,$ where $C = \max_i K_i(\mu_i | \nu^*_i)$,
\item \label{item:boundary11} $\nu^*_{\mathcal I} \in \partial (\mathcal{A}_2)_{\mathcal I}$, and
\item \label{item:kkt11} there exists a supporting hyperplane of $(\mathcal{A}_2)_{\mathcal I}$ at $\nu^*_{\mathcal I}$ given by $\sum_{i \in \mathcal I} a_i \nu_i = b$ such that
\begin{gather}
 {\nu}^*_i> \mu_i \mbox{ if }  a_i >0,
\mbox{ and  }
 {\nu}^*_i < \mu_i \mbox{  if } a_i <0 \quad \forall i \in \mathcal{I},	\label{eq:sign-match11}	\\
\frac{w^*_i}{a_i} K_i'(\mu_i|\nu^*_i) =  \frac{w^*_j}{a_j} K_j'(\mu_j|\nu^*_j)	\quad \forall i, j \in \mathcal{I}.	\label{eq:kkt-half-space11}
\end{gather}
\end{enumerate}
\end{theorem}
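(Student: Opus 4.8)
The plan is to convert Problem~\ref{eq:conv-set-lb} into the min-max problem \eqref{eq:min-max11} by Sion's minimax theorem, and then to read the stated conditions off the resulting saddle point together with a supporting-hyperplane analysis on a lower-dimensional projection of $\mathcal{A}_2$. Writing $F(w,\nu) = \sum_{i=1}^K w_i K_i(\mu_i|\nu_i)$, this is linear (hence concave and continuous) in $w$ on the compact convex simplex $\mathcal{P}_K$ and, by Assumption~\ref{ass:kl-cvx}, convex and continuous in $\nu$ on the convex set $\mathcal{A}_2$. To apply Sion's theorem I would first confine $\nu$ to a fixed compact set: using the finiteness of the optimal value noted before Lemma~\ref{lem:unique-vu*} together with the coercivity supplied by Assumption~\ref{ass:kl-range}, the inner infima are attained in a compact $\mathcal{K}\subset\mathcal{A}_2$ that does not depend on $w$, so $\inf_{\nu\in\mathcal{A}_2}$ may be replaced by $\min_{\nu\in\mathcal{K}}$. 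Sion's theorem then yields
\[
\max_{w\in\mathcal{P}_K}\min_{\nu\in\mathcal{K}} F(w,\nu) = \min_{\nu\in\mathcal{K}}\max_{w\in\mathcal{P}_K} F(w,\nu) = \inf_{\nu\in\mathcal{A}_2}\max_i K_i(\mu_i|\nu_i),
\]
where the last equality uses that a linear function is maximized over $\mathcal{P}_K$ at a vertex. This identifies the value of \ref{eq:conv-set-lb} with \eqref{eq:min-max11} and guarantees a saddle point $(w^*,\nu^*)$.

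Next I would extract information from the saddle inequalities $F(w,\nu^*)\le F(w^*,\nu^*)\le F(w^*,\nu)$. The left inequality says $w^*$ maximizes $F(\cdot,\nu^*)$, which forces $w^*$ to be supported on $\mathcal{I}=\argmax_i K_i(\mu_i|\nu^*_i)$; this is condition~\ref{item:zero11}. The right inequality says $\nu^*=\nu(w^*)$ minimizes $F(w^*,\cdot)$ over $\mathcal{A}_2$, and since $F(w^*,\nu^*)=\max_i K_i(\mu_i|\nu^*_i)$ this shows $\nu^*$ attains the outer infimum in \eqref{eq:min-max11}. For uniqueness of the min-max solution I would use the componentwise strict convexity of each $K_i(\mu_i|\cdot)$ from Assumption~\ref{ass:kl-cvx}: given two minimizers, their midpoint lies in $\mathcal{A}_2$, and on any coordinate active at the midpoint the two must agree, which together with Lemma~\ref{lem:unique-vu*} pins down $\nu^*$ on the relevant coordinates (the borderline case in which the constraint set is insensitive to some active coordinate being excluded exactly as $a_i\neq 0$ is imposed in the half-space reduction).

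To obtain conditions~\ref{item:boundary11} and~\ref{item:kkt11} I would exploit that, since $w^*_i=0$ for $i\in\mathcal{I}^c$, the inner minimization $\inf_{\nu\in\mathcal{A}_2}\sum_{i\in\mathcal{I}}w^*_i K_i(\mu_i|\nu_i)$ depends only on $\nu_{\mathcal I}$; minimizing out $\nu_{\mathcal{I}^c}$ replaces $\mathcal{A}_2$ by its projection $(\mathcal{A}_2)_{\mathcal I}$, so that $\nu^*_{\mathcal I}$ solves $\inf_{\nu_{\mathcal I}\in(\mathcal{A}_2)_{\mathcal I}}\sum_{i\in\mathcal I}w^*_i K_i(\mu_i|\nu_i)$. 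Because this objective strictly decreases as each active coordinate moves toward $\mu_i$, its minimizer cannot be interior to $(\mathcal{A}_2)_{\mathcal I}$, giving condition~\ref{item:boundary11}. At that boundary point I would invoke the supporting hyperplane theorem for the convex set $(\mathcal{A}_2)_{\mathcal I}$ to produce $\sum_{i\in\mathcal I}a_i\nu_i=b$, and then run the KKT/tangency argument of Theorem~\ref{thm:Theorem10111} on the projected problem: Lagrangian stationarity gives the slope-matching identity \eqref{eq:kkt-half-space11}, and the sign of each $a_i$ forces $\nu^*_i$ to the far side of $\mu_i$, giving \eqref{eq:sign-match11}. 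Sufficiency is the converse: assuming \ref{item:zero11}--\ref{item:kkt11}, the supporting hyperplane certifies $\nu^*_{\mathcal I}$ as the constrained minimizer by convexity while condition~\ref{item:zero11} certifies $w^*$ as the inner maximizer, so the first-order conditions upgrade to a saddle point and hence to global optimality.

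I expect the main obstacle to be condition~\ref{item:kkt11}. Since $\mathcal{A}_2$ is an arbitrary closed convex set whose boundary need not be smooth, I cannot differentiate a constraint function and must instead extract the hyperplane by convex separation and then correctly match its normal $(a_i)_{i\in\mathcal I}$ to the weights and KL-slopes. The delicate bookkeeping is that the projection $(\mathcal{A}_2)_{\mathcal I}$ is taken over the data-dependent active set $\mathcal I$, which itself depends on the unknown $\nu^*$, so the reduction has a fixed-point flavor; ruling out the degenerate situation in which some $a_i$ vanishes (which would destroy the uniqueness claim) is the same subtlety handled by the nondegeneracy WLOG in the half-space case.
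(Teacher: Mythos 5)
Your overall route is the same as the paper's: Sion's minimax theorem after a compactification, extraction of condition~\ref{item:zero11} from the saddle-point inequality in $w$, a boundary argument for condition~\ref{item:boundary11}, a separation argument for the hyperplane in condition~\ref{item:kkt11}, and a reduction to the half-space theorem (Theorem~\ref{thm:Theorem10111}) for sufficiency. One step, however, does not work as you state it. You claim that Assumption~\ref{ass:kl-range} yields a single compact $\mathcal{K}\subset\mathcal{A}_2$, independent of $w$, on which every inner infimum is attained, so that $\inf_{\nu\in\mathcal{A}_2}$ can be replaced by $\min_{\nu\in\mathcal{K}}$ uniformly over $\mathcal{P}_K$. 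This fails for $w$ on the boundary of the simplex: if $w_i=0$ then $F(w,\cdot)$ is insensitive to $\nu_i$, the sublevel sets of $F(w,\cdot)$ are unbounded in that coordinate, and the infimum over $\mathcal{A}_2$ equals an infimum over a projection of $\mathcal{A}_2$ that need not even be attained (projections of closed convex sets need not be closed). Coercivity controls the sublevel sets only up to a radius that blows up as $\min_i w_i\to 0$, so truncating to any fixed $\mathcal{K}$ can strictly inflate $\min_{\nu\in\mathcal{K}}F(w,\nu)$ above $\inf_{\nu\in\mathcal{A}_2}F(w,\nu)$ for such $w$, and hence potentially inflate the max over $w$. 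The paper closes exactly this gap differently: it applies Sion on $\mathcal{A}_2\cap\mathcal{B}_n$ for balls $\mathcal{B}_n$ of radius $n$, observes that the resulting value equals $\inf_{\nu\in\mathcal{A}_2}\max_i K_i(\mu_i|\nu_i)$ for all large $n$ (the right-hand side of Sion is unaffected because $\max_i K_i$ is coercive in every coordinate), and then shows $\max_w r_n(w)\to\max_w r(w)$ by Dini's theorem --- $r_n\downarrow r$ pointwise with $r_n,r$ continuous on the compact simplex, hence uniformly. You need this (or an equivalent) limiting argument; the one-shot compact truncation is not available.

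Two smaller points. First, for condition~\ref{item:kkt11} an arbitrary supporting hyperplane of $(\mathcal{A}_2)_{\mathcal I}$ at $\nu^*_{\mathcal I}$ will not satisfy \eqref{eq:kkt-half-space11}; the paper obtains the correct one by separating the open convex sublevel set $\mathcal{C}=\{\nu_{\mathcal I}:\sum_{i\in\mathcal I}w^*_iK_i(\mu_i|\nu_i)<\sum_{i\in\mathcal I}w^*_iK_i(\mu_i|\nu^*_i)\}$ from $(\mathcal{A}_2)_{\mathcal I}$, so that the hyperplane simultaneously supports both sets at $\nu^*_{\mathcal I}$ and the support condition for $\mathcal{C}$ is precisely the slope-matching identity; your closing paragraph gestures at this but the body of the argument should be phrased as a separation of the two sets rather than an appeal to the supporting hyperplane theorem for $(\mathcal{A}_2)_{\mathcal I}$ alone. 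Second, your uniqueness argument for the minimizer of \eqref{eq:min-max11} (agreement on active coordinates at the midpoint) is the right mechanism, but note it only pins down the coordinates that are active at the midpoint; the paper asserts strict convexity of $\nu\mapsto\max_iK_i(\mu_i|\nu_i)$ and relies on Lemma~\ref{lem:unique-vu*} for the identification with the subproblem minimizer, so neither treatment of this point is fully watertight, and your parenthetical about degenerate coordinates is a fair flag rather than a resolution.
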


Problem \textsf{CVX} (and indeed \ref{eq:lb-problem}) may heuristically be viewed as a game between an optimal algorithm and nature.
An algorithm  picks a $w \in \mathcal{P}_K$ that provides a recipe for proportionate sampling of different arms. Nature then selects a
$\nu \in \mathcal{A}_2$ that for a given $w$ minimizes  $\sum_{j=1}^K w_j K_j(\mu_j | \nu_j)$, and hence for the algorithm is the most difficult 
to separate from $\mu$. The algorithm looks for a $w$ that maximizes this minimum separation. Theorem~\ref{thm:min-max11}
makes an interesting observation that for convex $\mathcal{A}_2$, the algorithm has the option of not sampling some arms. Maximum separation may be obtained by focusing on a subset of arms and showing that they are well separated from the projection of $\mathcal{A}_2$
along the subspace associated with these arms. 

Condition (\ref{item:kkt11}) in Theorem~\ref{thm:min-max11} highlights the fact that along the projected space, finding a solution to
Problem \textsf{CVX} is equivalent to finding a solution to an appropriate half-space problem that is tangential to the projected convex set.

\begin{figure}
\begin{center}
\includegraphics[height=0.2\textheight]{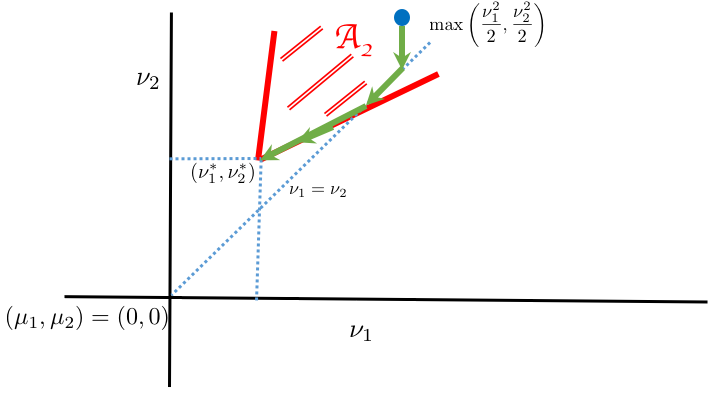}
\end{center}\caption{Algorithm to solve  (\ref{eq:min-max11}) in a simple setting of  two arms with standard Gaussian distribution. $\mathcal{A}_2$ is a closed
convex set and $\mu=(0,0) \in \mathcal{A}_1$. Since $\nu_2^* > \nu_1^*$, $\mathcal{I} =\{2\}$. This suggests that it is optimal to only sample arm 2 
  to separate  $\mu=(0,0)$ from $\mathcal{A}_2$.}
\label{fig:fig2}
\end{figure}

\begin{remark} {\em
 Since $\max_i  K_i(\mu_i | \nu_i)$ is a strictly convex function of $\nu$, (\ref{eq:min-max11}) shows that Problem \textsf{CVX}
 maybe solved for $\nu^*$ using any standard convex programming solver. 
 Remark~\ref{rem:smooth_bound} below emphasizes the point
 that  $w^*$ is easily calculated once $\nu^*$ is known, if there is a unique supporting hyperplane in $\RR^{|{\mathcal I}|}$,  of $(\mathcal{A}_2)_{\mathcal I}$ at $\nu^*_{\mathcal I}$.

 Figure~\ref{fig:fig2} demonstrates how a steepest descent based procedure may work to solve  (\ref{eq:min-max11}) in a simple setting of two arms. Arm $i$, for $i=1,2$, is Gaussian distributed with mean $\mu_i=0$ and variance 1. $KL(\mu_i |\nu_i) = \frac{\nu_i^2}{2}$ for $i=1,2$.
 The algorithm starts at a point $(\nu_1, \nu_2) \in \mathcal{A}_2$ with $\nu_2 >\nu_1$. Steepest descent direction to minimize $\max_{i=1,2} \frac{\nu_i^2}{2}$
 corresponds to reducing $\nu_2$ until $\nu_2 = \nu_1$. It then corresponds to decending along the direction $\nu_1=\nu_2$, until
 boundary of $\mathcal{A}_2$ is hit. In Figure~\ref{fig:fig2}, the algorithm continues to descend along the boundary reducing the 
  value of $\max_{i=1,2} \frac{\nu_i^2}{2}$ until the optimal point $(\nu_1^*, \nu_2^*)$. 
 Since $\nu_2^* > \nu_1^*$, we have $\mathcal{I} =\{2\}$. Thus, the lower bound analysis suggests that it is optimal to only sample arm 2 
  to separate  $\mu=(0,0)$ from $\mathcal{A}_2$.}
  \end{remark}

\begin{remark} \label{rem:smooth_bound} {\em
Condition \ref{item:kkt} shows that the problem has a unique solution, i.e., the optimal $w^*$ is a singleton, if there is a unique supporting hyperplane of $(\mathcal{A}_2)_{\mathcal I}$ at $\nu^*_{\mathcal I}$.
Consider the case where $\mathcal{A}_2 = \{ \nu: f(\nu) \leq c \}$ is the c-sublevel set of a convex function $f$.
Then,  $(\mathcal{A}_2)_{\mathcal I}$ is the c-sublevel set of the function $h: \Re^{|\mathcal{I}|}
\rightarrow \Re, h(\nu_{\mathcal I}) := \inf_{ \nu_{\mathcal{I}^c} \in \RR^{\abs{\mathcal{I}^c}}}f(\nu_{\mathcal I}, \nu_{\mathcal{I}^c})$. 
Further suppose that $h(\cdot)$ is a smooth function. Then, the unique tangential hyperplane 
at $\nu^*_{\mathcal{I}}$ is given by
$\nabla h(\nu^*_{\mathcal{I}})^{\intercal} (\nu_{\mathcal{I}}-\nu^*_{\mathcal{I}})=0.$
In particular, in this case for $i \in \mathcal{I}$,
$w^*_i \propto \frac{\frac{\partial h}{\partial \nu_i}(\nu^*_{\mathcal{I}})}{K_i'(\mu_i|\nu^*_i)}.$
}
\end{remark}

%\begin{proof}

\vspace{0.1in}

\subsection{ $\mathcal{A}_1$ is a polytope}
\label{sec:non-convex}
In Section~\ref{sec:convex}, $\mathcal{A}_2$ is convex, while $\mathcal{A}_1$ need not be.
This allowed us to explicitly characterize the solution to the lower bound problem. We now briefly
consider the case where $\mathcal{A}_1$ is convex, and  $\mathcal{A}_2$ need not be. Specifically, we examine the case where
$\mathcal{A}_1$ is a polytope so that  $\mathcal{A}_2$ is a union of half-spaces.
Just as the single half-space problem was useful in studying
 the case where $\mathcal{A}_2$ is convex,
analyzing $\mathcal{A}_2$ when it is a union of half-spaces,  may provide  insights to a more general problem  where $\mathcal{A}_2$  is a union of convex sets.
The latter may be an interesting area for future research.

Let
\begin{equation} \label{eqn:a700}
\mathcal{B}_j \triangleq \{\nu \in \RR^K: \sum_{k=1}^K a_{j,k}\nu_k \geq b_j \},
\end{equation}
 each $b_j \geq 0$, and $\mathcal{A}_2 = \cup_{j=1}^m \mathcal{B}_j$ be the union of these half-spaces. 
 To ease technicalities, suppose that ${\mathcal U} = \RR^K$. The lower  bound problem may be expressed as 
 \begin{equation}
 \label{eq:union-lb}
C^*(\mu) =  \max_{w \in \mathcal{P}_K} \inf_{\nu \in \cup_{j=1}^m \mathcal{B}_j} \sum_{i=1}^K w_i K_i(\mu_i | \nu_i) .
 \end{equation}
 
Let  $\mathcal{W}(\mu)$ denote the optimal solution set. Lemma~\ref{lem:unique-soln} shows that  the optimization problem in \eqref{eq:union-lb} has a unique solution, that is,
$\mathcal{W}(\mu)$ is a singleton.

\begin{lemma}
\label{lem:unique-soln}
There is a unique $w \in \mathcal{P}_K$ that achieves the maximum in \eqref{eq:union-lb}.
\end{lemma}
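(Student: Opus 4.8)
The plan is to exploit the concavity and the positive homogeneity of the objective, reducing everything to a first-order analysis on each binding half-space. Writing $h_j(w) := \inf_{\nu \in \mathcal B_j} \sum_{i=1}^K w_i K_i(\mu_i \mid \nu_i)$, the objective in \eqref{eq:union-lb} is $g(w) = \min_{1\le j \le m} h_j(w)$, since the infimum over a finite union is the minimum of the infima. Each $h_j$ is an infimum of functions linear in $w$, hence concave, and is positively homogeneous of degree one; therefore $g$ is concave, the maximum over the compact simplex $\mathcal P_K$ is attained, and $\mathcal W(\mu)$ is a nonempty convex compact set. Since $\mu \notin \mathcal A_2$ (a closed set) while $K_i(\mu_i\mid\cdot)$ is continuous and vanishes only at $\mu_i$, the optimal value $C^* = C^*(\mu)$ is strictly positive; this positivity is used repeatedly. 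I would then argue by contradiction: suppose $w^{(0)} \ne w^{(1)}$ are both optimal, let $\bar w = \tfrac12(w^{(0)}+w^{(1)})$ be their midpoint (also optimal by concavity), set $\Delta = w^{(1)} - w^{(0)}$, and aim to show $\Delta = 0$.

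First I would pin down the binding half-spaces. Call $j$ active at the midpoint if $h_j(\bar w) = C^*$. Concavity of $h_j$ together with $h_j(w^{(0)}), h_j(w^{(1)}) \ge g(\cdot) = C^*$ forces the chain $C^* = h_j(\bar w) \ge \tfrac12\bigl(h_j(w^{(0)}) + h_j(w^{(1)})\bigr) \ge C^*$ to be tight, so $h_j(w^{(0)}) = h_j(w^{(1)}) = C^*$ and $h_j$ is affine along the segment. Tightness of the infimum then yields a single minimizer $\nu^{*,j}$ that is optimal at every point of the segment: if $\nu^{*,j}$ minimizes at $\bar w$, then $h_j(\bar w) = \langle \bar w, K(\nu^{*,j})\rangle = \tfrac12\langle w^{(0)}, K(\nu^{*,j})\rangle + \tfrac12\langle w^{(1)}, K(\nu^{*,j})\rangle$, and equality with $\tfrac12(h_j(w^{(0)})+h_j(w^{(1)}))$ forces $\nu^{*,j}$ to minimize at the endpoints as well. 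Under Assumption~\ref{ass:kl-cvx} this minimizer is unique in the coordinates the half-space constrains. Two structural facts complete the setup: activity ($h_j = C^* > 0$) forces every coordinate $i$ with $a_{j,i}\ne 0$ to carry positive weight — otherwise one pushes $\nu_i \to \pm\infty$ to satisfy the constraint at zero cost and obtains $h_j=0$ — and it forces $\mu \notin \mathcal B_j$, so the constraint binds with a positive KKT multiplier.

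Second, I would combine the stationarity conditions with homogeneity. The stationarity condition for the shared minimizer over $\mathcal B_j$ reads $w_i K_i'(\mu_i \mid \nu^{*,j}_i) = \lambda_j(w)\, a_{j,i}$; evaluating at $w^{(0)}$ and $w^{(1)}$ and dividing (legitimate since $a_{j,i} \ne 0 \Rightarrow \nu^{*,j}_i \ne \mu_i \Rightarrow K_i' \ne 0$) gives $\Delta_i = \rho_j\, w^{(0)}_i$ on $\{i : a_{j,i} \ne 0\}$ for a scalar $\rho_j$. By Euler's relation for the degree-one homogeneous $h_j$ and because $K_i(\mu_i\mid \nu^{*,j}_i)=0$ whenever $a_{j,i}=0$, the affine value along the segment is $h_j(w^{(0)} + t\Delta) = C^*(1 + t\rho_j)$; matching $h_j(w^{(1)}) = C^*$ yields $\rho_j = 0$, i.e. $\Delta_i = 0$ on the nonzero-coefficient coordinates of every active $j$. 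It remains to treat coordinates that no active half-space constrains. Here I would invoke the first-order optimality of $\bar w$ for the concave maximization over $\mathcal P_K$, expressed through the superdifferential $\mathrm{conv}\{(K_i(\mu_i\mid\nu^{*,j}_i))_i : j \text{ active}\}$: there is a supergradient $s$ with $s_i = C^*$ on the support of $\bar w$ and $s_i \le C^*$ off it. A coordinate untouched by all active half-spaces has $s_i = 0 < C^*$, so it lies outside the support and $\Delta_i = 0$ there too. Hence $\Delta = 0$, contradicting $w^{(0)} \ne w^{(1)}$.

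I expect the main obstacle to be the degeneracy created by the degree-one homogeneity, which rules out any global strict-concavity shortcut, compounded by coordinates that a given half-space ignores and by several simultaneously active half-spaces. The delicate bookkeeping is to show that the per-half-space proportionalities $\Delta_i = \rho_j w^{(0)}_i$ patch consistently across coordinates shared by different active constraints and, via Euler's relation, force every $\rho_j$ to vanish, while the simplex optimality condition simultaneously removes weight from coordinates untouched by any active constraint. Carefully justifying the envelope/KKT stationarity at the shared minimizer — the positivity of the multiplier, uniqueness of the minimizer only in the constrained coordinates under Assumption~\ref{ass:kl-cvx}, and the claim $C^* > 0$ — is where the technical care concentrates.
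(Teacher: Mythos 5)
Your proof is correct and rests on the same two pillars as the paper's own argument: concavity of $w \mapsto \inf_{\nu}\sum_i w_i K_i(\mu_i|\nu_i)$ forces any two optimal weight vectors to share the inner minimizer of their midpoint, and the KKT proportionality conditions at that shared minimizer, combined with the common optimal value $C^*>0$, then pin $w$ down. Your write-up is somewhat more careful than the paper's on two points it glosses over --- coordinates $i$ with $a_{j,i}=0$, where the paper's ratio equations $\frac{w_i}{a_{j,i}}K_i'(\mu_i|\nu_i)=\frac{w_1}{a_{j,1}}K_1'(\mu_1|\nu_1)$ degenerate and you instead invoke the simplex first-order conditions, and the possibility of several simultaneously active half-spaces --- but the underlying mechanism is identical.
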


 \begin{remark} {\em
 It  is easy to see that the {\bf best arm identification problem} is a special case of this problem.   To see  this, suppose arm $1$ has the highest mean among the $K$ arms, i.e., $\mu_1 \geq \mu_j \; \forall j \neq 1.$ We then have $\mathcal{A}_2 = \cup_{j=2}^K \mathcal{B}_j,$ where for any $j$, $\mathcal{B}_j = \{\nu \in \RR^K: \nu_j \geq \nu_1\}.$}
\end{remark}

Observe that $\inf_{\nu \in \mathcal{A}_2} \sum_{i=1}^K w_i K_i(\mu_i | \nu_i)$ being an infimum of linear functions of $w$,  is a concave function 
of $w$, for any $\mathcal{A}_2$. Thus, standard gradient descent methods can be used to solve (\ref{eq:union-lb}),
once an algorithm exists for solving  $ g(\mu, w) \triangleq   \inf_{\nu \in \cup_{j=2}^m  \mathcal{B}_j} \sum_{i=1}^K w_i K_i(\mu_i | \nu_i)$.
This is straightforward as  
\[
\inf_{\nu \in \cup_{j=2}^m  \mathcal{B}_j} \sum_{i=1}^K w_i K_i(\mu_i | \nu_i)=
 \min_{j \leq m} \inf_{\nu \in \mathcal{B}_j} \sum_{i=1}^K w_i K_i(\mu_i | \nu_i).
\]
Thus, one may solve the strictly convex problem $g_j(\mu, w) \triangleq \inf_{\nu \in \mathcal{B}_j} \sum_{i=1}^K w_i K_i(\mu_i | \nu_i)$
for each $j$ and set  $ g(\mu, w)= \min_{j \leq m} g_j(\mu, w)$. 
An algorithm for numerically solving for $g_j(\mu, w)$ is easily designed and is given in Appendix~\ref{app:polytope}. 
An  
outline of a simple algorithm to compute $C^*(\mu)$ is  as follows: 
\medskip
\newline
(i) Given a $w$, for each $j$, solve the strictly convex optimization problem $\inf_{\nu \in \mathcal{B}_j} \sum_{i=1}^K w_i K_i(\mu_i | \nu_i)$
to determine $g_j(\mu, w)$.
\medskip
\newline
(ii)
Compute $ g(\mu, w)= \min_{j \leq m} g_j(\mu, w)$. Use a numerical procedure to determine the gradient of $g(\mu, w)$ with respect to $w$.
Update $w$ using any version of gradient-descent, and repeat.

\medskip

In Appendix~\ref{subsection_Gaussian},  we restrict ourselves to two arms, both  having a Gaussian distribution with known
and common variance.  This simple setting lends itself to elegant comprehensive analysis and a graphical interpretation.

\section{An asymptotically optimal algorithm}
\label{sec:algo}
In this section, we outline a $\delta$-PAC algorithm (Algorithm \ref{alg:1-param-exp-fam}) for the $\mathcal{PI}$ problem  which, under mild conditions, achieves asymptotically optimal mean termination time as $\delta \to 0$. Both the algorithm and its analysis closely
 follow the best arm identification in \cite{garivier2016optimal}. The sampling rule used in the algorithm (described below) is inspired by the lower bound  Problem \textsf{LB}.  The stopping rule follows from the generalised likelihood ratio method (see \cite{chernoff1959sequential}).
 See  \cite{garivier2016optimal} for the derivation of this rule. 

In Problem \textsf{LB}, let $\mathcal{W}(\mu)$ and $C^*(\mu)$, respectively denote 
the optimal solution set and optimal value. Let $V(\mu, w)$  and $g(\mu, w)$, respectively denote  the optimal solution set and optimal value of the inner sub-problem.
We consider settings where Problem \textsf{LB} has a unique optimal solution. That is,  $|\mathcal{W}(\mu)|=1$. Recall that 
in Problem \textsf{LB}, $\mu \in \mathcal{A}_j$.

\medskip

\noindent {\bf Sampling Rule:}
The essential idea is to draw samples according to estimated optimal sampling ratios obtained by solving Problem \textsf{LB}  with empirical means substituting the true means. In other words, if $\hat{\mu}(t)$ is the vector of empirical means of the arms at time $t$, an arm is chosen to bring the ratio of total number of samples for all the arms closer to an optimal ratio $\hat{w}(t) \in \mathcal{W}(\hat{\mu}(t))$.  But this simple strategy may result in erroneously giving too few samples to an arm due to initial bad estimates preventing convergence to the correct value in subsequent sample allocations. This difficulty can be dealt with through forced exploration for each arm to ensure sufficiently fast convergence. 

\cite{garivier2016optimal} propose a `D-Tracking' rule  along these lines for the best arm problem  that ensures 
convergence to the correct sampling ratio. We also use this rule as the sampling rule in our algorithm. The rule can be described as follows. Let $N_i(t)$ denote the number of samples of arm $i$ at sampling step $t$ for all $i$ and let $\hat{w}(t) \in \mathcal{W}(\hat{\mu}(t))$. \textit{If there exists an arm $i$ such that $N_i(t) < \sqrt{t} - K/2$, choose that arm. Otherwise, choose an arm that has the maximum difference between the estimated optimal ratio and the actual fraction of samples, i.e., an arm is chosen from $\argmax_i \hat{w}_i(t) - N_i(t)/t$.} 
This sampling rule has the following properties: \newline
(i) each arm gets $\Omega(\sqrt{t})$, \newline
(ii)  if the estimated sampling ratios $\mathcal{W}(\hat{\mu}(t))$ converge to an optimal ratio $\mathcal{W}(\mu)$, then the actual fraction of samples also converges to the same optimal ratio.

\medskip 

\noindent {\bf Stopping Rule:}
Let {\em threshold function} $\beta(t, \delta) = \log \left( \frac{c t}{\delta} \right)$, where $c$ is an appropriately chosen constant. 
The stopping rule uses a threshold rule that imitates the lower bound \eqref{eq:lb}. It first finds the partition in which the empirical mean vector $\hat{\mu}(t)$ lies. Denote this partition after generating $t$  samples by $\mathcal{A}(t)$. If
$\inf_{\nu \in \mathcal{A}^c(t)} \sum_i N_i(t) K_i(\hat{\mu}_i(t) | \nu_i)   \geq \beta(t, \delta),$ then it stops and declares  $\mathcal{A}(t)$ as the partition containing  $\mu$. Else, it continues to sample arms according to the D-Tracking rule. 

\begin{algorithm}
  \caption{Algorithm for one parameter exponential families}
  \begin{algorithmic}
  \State Sample each arm once. Set $\hat{\mu}(0)$ to the observed sample average of each arm. Set $t=1$
  \State At sample $t$,
  \State Compute weights $w(\hat{\mu}(t-1))$ and sample according to D-Tracking rule \Comment{Sampling Rule}
%  \State Let $\hat{\mu}_i(t)$ be the sample mean of arm $i$ from the $N_i(t)$ samples obtained in the period $(t/2, t-1]$
  \State Let $\hat{\mu}(t) \in \mathcal{A}(t)$.
  
  \noindent {\bf If} $\inf_{\nu \in \mathcal{A}^c(t)} \sum_i N_i(t) K_i(\hat{\mu}_i(t) | \nu_i)   \geq \beta(t, \delta)$ {\bf then} \Comment{Termination Rule}
  \newline
  Declare $\mu \in \mathcal{A}(t)$.
  \newline
  \noindent {\bf end if}
  \newline
  \noindent {\bf Else} Increment $t$ by 1 and continue.
   \end{algorithmic}
    \label{alg:1-param-exp-fam}
\end{algorithm}

\medskip

\noindent {\bf Sample complexity analysis:}
Let $T_U(\delta)$ be the time at which Algorithm~\ref{alg:1-param-exp-fam} terminates. Then we have the following guarantee.
\begin{theorem}
\label{thm:1-param-exp-fam}
%For any $\rho \in [1, e/2]$, 
Suppose that $\Omega \subset \mathcal{U}$ and Assumptions 1 and 2 hold. 
If \ref{eq:lb-problem} has a unique optimal solution, i.e., if $\abs{\mathcal{W}(\mu)} = 1$, then Algorithm~\ref{alg:1-param-exp-fam} is a $\delta$-PAC algorithm with
$$\limsup_{\delta \to 0} \frac{\EE[T_U(\delta)]}{\log\left( \frac{1}{\delta} \right)} \leq  T^*(\mu).$$
\end{theorem}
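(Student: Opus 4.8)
The plan is to follow the two-part template of \cite{garivier2016optimal}: first establish the $\delta$-PAC correctness guarantee, which controls the error probability through the choice of threshold $\beta(t,\delta)$, and then bound the expected termination time by showing that the empirical stopping statistic grows at asymptotic rate $C^*(\mu)$ per sample, so that stopping occurs near $t = T^*(\mu)\log(1/\delta)$.

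For the $\delta$-PAC guarantee, suppose $\mu \in \mathcal{A}_j$ and that the algorithm stops at some time $\tau$ declaring $\mathcal{A}(\tau)\neq\mathcal{A}_j$, so that $\mu\in\mathcal{A}^c(\tau)$. Evaluating the infimum in the stopping rule at the point $\nu=\mu$ shows that such a wrong declaration forces $\sum_i N_i(\tau)\,K_i(\hat{\mu}_i(\tau)\,|\,\mu_i)\geq\beta(\tau,\delta)$. Hence the error probability is bounded by $\PP_\mu\bigl(\exists t:\ \sum_i N_i(t)\,K_i(\hat{\mu}_i(t)\,|\,\mu_i)\geq\beta(t,\delta)\bigr)$. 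I would then invoke a uniform-in-time deviation inequality for self-normalized sums of SPEF variables (the maximal inequality used in \cite{garivier2016optimal}); with $\beta(t,\delta)=\log(ct/\delta)$ and $c$ chosen large enough, the polynomial-in-$t$ correction lets the union-over-time bound collapse to $\delta$. Assumption~\ref{ass:kl-cvx} ensures $K_i$ is regular enough for this step.

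For the sample complexity bound, the chain of convergences is the heart of the argument. The forced-exploration component of the D-Tracking rule guarantees $N_i(t)=\Omega(\sqrt t)$, so by the strong law $\hat{\mu}(t)\to\mu$ almost surely; since each $\mathcal{A}_j$ is open, $\hat{\mu}(t)$ eventually lies in $\mathcal{A}_j$, giving $\mathcal{A}(t)=\mathcal{A}_j$ and $\mathcal{A}^c(t)=\mathcal{A}_j^c$ for all large $t$. Using continuity of the solution map together with $|\mathcal{W}(\mu)|=1$, the estimated weights $\hat{w}(t)\in\mathcal{W}(\hat{\mu}(t))$ converge to $w^*$, and property (ii) of D-Tracking then gives $N_i(t)/t\to w_i^*$. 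Since $g(\mu,w)=\inf_{\nu\in\mathcal{A}_j^c}\sum_i w_i K_i(\mu_i\,|\,\nu_i)$ is jointly continuous, the normalized stopping statistic $\tfrac1t\inf_{\nu\in\mathcal{A}^c(t)}\sum_i N_i(t)K_i(\hat{\mu}_i(t)\,|\,\nu_i)=g(\hat{\mu}(t),N(t)/t)$ converges to $g(\mu,w^*)=C^*(\mu)$. Fixing $\epsilon>0$, there is an almost surely finite random time $T_\epsilon$ after which this statistic exceeds $t\,(C^*(\mu)-\epsilon)$, so $T_U(\delta)\leq\max\bigl(T_\epsilon,\,T_0(\delta)\bigr)$ with $T_0(\delta)=\inf\{t: t(C^*(\mu)-\epsilon)\geq\beta(t,\delta)\}\sim\tfrac{\log(1/\delta)}{C^*(\mu)-\epsilon}$. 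Taking expectations gives $\EE[T_U(\delta)]\leq T_0(\delta)+\EE[T_\epsilon]$; dividing by $\log(1/\delta)$, sending $\delta\to0$ and then $\epsilon\to0$ yields $\limsup_\delta \EE[T_U(\delta)]/\log(1/\delta)\leq 1/C^*(\mu)=T^*(\mu)$.

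The main obstacle is twofold. First is the continuity of $\mu\mapsto\mathcal{W}(\mu)$ and of $C^*(\cdot)$: uniqueness of the optimizer makes a Berge-type maximum-theorem argument available, but verifying its hypotheses in the present generality leans on the structural characterizations of the optimal pair $(w^*,\nu^*)$ established in the preceding theorems (e.g.\ \ref{thm:min-max11}). Second is making the tail control rigorous enough to bound $\EE[T_\epsilon]$ uniformly in $\delta$: this requires combining the $\Omega(\sqrt t)$ forced-exploration lower bound on each $N_i(t)$ with a concentration inequality for $\hat{\mu}(t)$ so that $\PP(T_\epsilon>t)$ is summable, ensuring $\EE[T_\epsilon]<\infty$ independently of $\delta$ and legitimizing the interchange above.
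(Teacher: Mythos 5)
Your proposal follows essentially the same route as the paper's proof: the same evaluation of the stopping statistic at $\nu=\mu$ plus a uniform-in-time SPEF deviation inequality for the $\delta$-PAC part, and the same chain (forced exploration $\Rightarrow$ $\hat{\mu}(t)\to\mu$ $\Rightarrow$ $\hat{w}(t)\to w^*$ via continuity of the unique solution map $\Rightarrow$ $N(t)/t\to w^*$ via D-Tracking $\Rightarrow$ $g(\hat{\mu}(t),N(t)/t)\to C^*(\mu)$) for the complexity part, with the tail of the random time controlled by exponential concentration exactly as the paper does through the events $\mathcal{E}_T$ and the bound $B\exp(-CT^{1/8})$. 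The two obstacles you flag are precisely the ones the paper resolves via Lemma~\ref{lem:cont} and the summability of $\PP[\mathcal{E}_T^c]$, so the argument is correct and matches the paper's.
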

As seen in \cref{sec:lb}, for threshold crossing, half space problem and the polytope
problem  \ref{eq:lb-problem} has a unique optimal solution. When $\mathcal{A}_2$ is a closed convex set and
 the associated $\nu^*\in \partial \mathcal{A}_2$ is a smooth point (with a unique supporting hyperplane), then \ref{eq:lb-problem} again has a unique optimal solution.

The proof of  \cref{thm:1-param-exp-fam} is along the lines of \cite{garivier2016optimal}, and is given in Appendix~\ref{sec:appendix:algo} for completeness.
The  following continuity result is needed for the proof.
\begin{lemma}
\label{lem:cont}
Under conditions of Theorem~\ref{thm:1-param-exp-fam},
the function $g$ is continuous at $(\mu, w)$ for any $w \in \mathcal{P}_K$. Further, if \ref{eq:lb-problem} has a unique optimal solution, then this solution is continuous at $\mu$.
\end{lemma}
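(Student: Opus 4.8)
The plan is to prove the two assertions in turn: first the joint continuity of $g(\mu,w) = \inf_{\nu \in \mathcal{A}_j^c} \sum_{i=1}^K w_i K_i(\mu_i | \nu_i)$, and then, granting this, the continuity of the unique maximizer via Berge's maximum theorem. Throughout recall that $\Omega \subset {\mathcal U}$ with ${\mathcal U}$ open, so every feasible $\nu$ lies in ${\mathcal U}$ where each $K_i$ is jointly $\mathcal{C}^\infty$, hence continuous. Upper semicontinuity of $g$ is then immediate: for each fixed $\nu \in \mathcal{A}_j^c$ the map $(\mu,w) \mapsto \sum_i w_i K_i(\mu_i | \nu_i)$ is continuous, and $g$ is their pointwise infimum. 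The real content of the first claim is \emph{lower} semicontinuity.

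For lower semicontinuity I would take $(\mu_n, w_n) \to (\mu, w)$ and, after passing to a subsequence, assume $g(\mu_n, w_n) \to L$, the case $L = +\infty$ being trivial. Let $\mathcal{P} = \{i : w_i > 0\}$ and choose near-minimizers $\nu_n \in \mathcal{A}_j^c$ with $\sum_i w_{n,i} K_i(\mu_{n,i} | \nu_{n,i}) \le g(\mu_n, w_n) + 1/n \to L$. The first technical step is a uniform coercivity estimate from Assumption~\ref{ass:kl-range}: since each $K_i(\mu_i | \cdot)$ is strictly convex with minimum $0$ at $\nu_i = \mu_i$ and diverges at $\partial {\mathcal U}_i$, the endpoints of the sublevel set $\{\nu_i : K_i(\mu_i | \nu_i) \le M\}$ vary continuously with $\mu_i$, so over a compact neighborhood of $\mu$ these sets stay inside a fixed compact subset of ${\mathcal U}_i$. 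For $i \in \mathcal{P}$ one has $w_{n,i} \ge w_i/2 > 0$ eventually, hence $K_i(\mu_{n,i} | \nu_{n,i}) \le 2(L+1)/w_i$, and coercivity confines $\{\nu_{n,i}\}_{i \in \mathcal{P}}$ to a compact set; pass to a further subsequence so that $\nu_{n,i} \to \bar\nu_i \in {\mathcal U}_i$ for $i \in \mathcal{P}$. Dropping the nonnegative terms off $\mathcal{P}$ and using joint continuity gives $L \ge \lim_n \sum_{i \in \mathcal{P}} w_{n,i} K_i(\mu_{n,i} | \nu_{n,i}) = \sum_{i \in \mathcal{P}} w_i K_i(\mu_i | \bar\nu_i)$. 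Finally, because $w_i = 0$ off $\mathcal{P}$, the value $g(\mu,w)$ equals the infimum of $\sum_{i \in \mathcal{P}} w_i K_i(\mu_i | \cdot)$ over the projection $(\mathcal{A}_j^c)_{\mathcal{P}}$ (cf.\ the projection construction preceding Theorem~\ref{thm:min-max11}); since $(\nu_n)_{\mathcal{P}} \in (\mathcal{A}_j^c)_{\mathcal{P}}$ tends to $\bar\nu_{\mathcal{P}} \in \overline{(\mathcal{A}_j^c)_{\mathcal{P}}}$ and the objective is continuous, so that the infimum over the set and over its closure coincide, I conclude $g(\mu,w) \le \sum_{i \in \mathcal{P}} w_i K_i(\mu_i | \bar\nu_i) \le L$. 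This establishes lower semicontinuity and hence joint continuity of $g$.

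For the second claim I would apply Berge's maximum theorem to the outer problem $C^*(\mu) = \max_{w \in \mathcal{P}_K} g(\mu, w)$: the feasible set $\mathcal{P}_K$ is compact and independent of $\mu$, and $g$ is now known to be jointly continuous, so $C^*$ is continuous and the argmax correspondence $\mathcal{W}(\mu)$ is nonempty, compact-valued, and upper hemicontinuous. When \ref{eq:lb-problem} has a unique optimal solution, $\mathcal{W}(\mu) = \{w^*(\mu)\}$ is a singleton, and upper hemicontinuity upgrades to continuity by the standard argument: for any $\mu_n \to \mu$ and any selection $w_n \in \mathcal{W}(\mu_n)$, compactness of $\mathcal{P}_K$ extracts a convergent subsequence whose limit lies in $\mathcal{W}(\mu)$ by the closed-graph property, hence equals $w^*(\mu)$; as every subsequential limit is $w^*(\mu)$, the whole sequence converges, giving continuity of the solution at $\mu$.

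The main obstacle, and the only place where the hypotheses genuinely enter, is the lower semicontinuity step: $\mathcal{A}_j^c$ is unbounded, and when $w$ has zero coordinates the objective loses coercivity precisely in those directions, so minimizing sequences may run off to infinity there. The two devices that resolve this—the uniform coercivity estimate from Assumption~\ref{ass:kl-range} applied on the support $\mathcal{P}$ of $w$, and the reduction of $g(\mu,w)$ to a problem on the projection $(\mathcal{A}_j^c)_{\mathcal{P}}$—are where care is required, while the remaining steps are routine.
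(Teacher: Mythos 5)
Your proof is correct, but it follows a genuinely different route from the paper's. The paper truncates the feasible region to compact balls $\mathcal{B}_n$, invokes the parametric-programming continuity result (Theorem 2.1 of \cite{fiacco1990}) for each truncated value function $g_n$, and then passes to the limit by asserting uniform convergence of the decreasing sequence $g_n \downarrow g$; for the second claim it cites Theorem 2.2 of \cite{fiacco1990}. You instead give a self-contained two-sided semicontinuity argument: upper semicontinuity is free since $g$ is an infimum of jointly continuous functions, and lower semicontinuity is obtained by showing near-minimizing sequences are precompact on the support $\mathcal{P}$ of $w$ via a uniform coercivity estimate from Assumption~\ref{ass:kl-range}, with the zero-weight coordinates simply discarded. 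This is arguably the more robust treatment: the delicate case is exactly $w$ with vanishing coordinates, where minimizers can escape to infinity, and your argument isolates and resolves it explicitly, whereas the paper's uniform-convergence step leans on a Dini-type argument whose hypotheses are not spelled out. Two small remarks. First, your justification of uniform coercivity via ``the endpoints of the sublevel set vary continuously with $\mu_i$'' can be replaced by a one-line monotonicity argument: $\partial_{\mu_i} K_i(\mu_i|\nu_i) = {\Lambda_i^*}'(\mu_i) - {\Lambda_i^*}'(\nu_i)$, so $K_i(\cdot|\nu_i)$ is decreasing on $\{\mu_i < \nu_i\}$ and increasing on $\{\mu_i > \nu_i\}$; hence $\inf_{\mu_i' \in C} K_i(\mu_i'|\nu_i)$ equals $K_i$ evaluated at the endpoint of the compact set $C$ nearest to $\nu_i$, which diverges as $\nu_i \to \partial\mathcal{U}_i$ by Assumption~\ref{ass:kl-range}. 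Second, the detour through the projection $(\mathcal{A}_j^c)_{\mathcal{P}}$ and its closure in the last step of the lower-semicontinuity argument is unnecessary: each $\nu_n$ lies in $\mathcal{A}_j^c$ and is therefore feasible for the problem defining $g(\mu,w)$ itself, so $g(\mu,w) \leq \sum_{i \in \mathcal{P}} w_i K_i(\mu_i|\nu_{n,i})$ directly, and letting $n \to \infty$ gives the bound by $\sum_{i\in\mathcal{P}} w_i K_i(\mu_i|\bar\nu_i) \leq L$. For the second claim, your Berge/subsequence argument and the paper's citation deliver the same conclusion; note that the closed-graph step only requires continuity of $g$ at points of the slice $\{\mu\}\times\mathcal{P}_K$, which is exactly what the first claim provides.
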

For notational ease, let $\mathcal{A}$ denote $\mathcal{A}_j$
and $\mathcal{A}^c$ denote $\Omega -\mathcal{A}_j$.
 If  $\bar{{\mathcal A}^c}$ is compact, then  Theorem 2.1 in \cite{fiacco1990} implies that $g$ is continuous at $(\mu, w)$.
Continuity of  the optimal solution to  \ref{eq:lb-problem}  at $\mu$ when $\mathcal{W}(\mu)$ is a singleton
also follows from Theorem 2.2 in \cite{fiacco1990}. The details for general $\bar{{\mathcal A}^c}$
are given in Appendix~\ref{sec:appendix:algo}.

%\bibliography{./hyp-test}

% Acknowledgments---Will not appear in anonymized version
%\acks{We thank a bunch of people.}

%\bibliography{BibTex}

\appendix
\section{Threshold Crossing Problem} \label{sec:app:threshold}

In this section first  in Example~\ref{ex:1}, we  discuss
how the threshold crossing problem arises naturally  in nested simulation used in financial portfolio risk measurement.
We then prove Theorem~\ref{the:theorem1}.

\begin{example}  \label{ex:1} {\em
Consider the problem of measuring tail risk in a portfolio comprising financial derivatives. The key property of a financial derivative is that as a function of underlying stock prices or other financial instruments, it's value is a conditional expectation (see, e.g., \cite{duffie2010dynamic}, \cite{shreve2004stochastic}). Thus, the value of a portfolio of financial securities that contains financial derivatives can also be expressed as a  conditional expectation given the value of underlying financial instruments.

Suppose that $(X_1, \ldots, X_K)$, where each $X_t$ is a vector in a Euclidean space, denote the macroceconomic variables and financial instruments  at time $t$, such as prevailing interest rates, stock index value and
stock prices, on which the value of a portfolio depends.   For notational convenience we have assumed that times take integer values.

Portfolio loss amount at any time $t$ is a function of ${\mathcal X}_t \triangleq (X_1, \ldots, X_t)$ and is given by
$E(Y_t| {\mathcal X}_t)$ for some random variable $Y_t$ (see, e.g. \cite{gordy2010nested}, \cite{broadie2011efficient}
for further discussion on portfolio loss as a conditional expectation, and the need for  nested simulation).
The quantity $E(Y_t| {\mathcal X}_t)$ is not known, however, conditional on ${\mathcal X}_t$, independent samples of $Y_t$ can be generated via simulation.
Our interest is in estimating the probability that the portfolio loss by time $K$ exceeds a large threshold
$u$ or
\begin{equation} \label{eqn:fin_prob}
\gamma \triangleq P( \max_{1 \leq t \leq K} Z_t \geq u),
\end{equation}
where $Z_t =  E(Y_t|{\mathcal X}_t)$.

These probabilities typically do not have a closed form expression and are estimated using Monte Carlo simulation.
An algorithm to estimate this probability maybe nested
and is given as follows:
\begin{enumerate}
\item
Repeat the outer loop iterations for $1 \leq j \leq n$.
\item
At outer loop iteration $j$, generate through Monte Carlo a sample of underlying factors
$(X_{1,j}, \ldots, X_{K,j})$.
\item
Given this sample, we need to ascertain whether
\[
W_j \triangleq \max_{1 \leq t \leq K} Z_{t,j} \geq u,
\]
where $Z_{t,j} =  E(Y_t|{\mathcal X}_{t,j})$.
 This fits our framework of threshold crossing problem where we may sequentially generate
 conditionally independent samples of $Y_t$ for each $t$ conditional on $(X_{1,j}, \ldots, X_{t,j})$
 and arrive at an indicator $\hat{W}_j$ that equals $W_j$ with probability
 $\geq 1- \delta$.
\end{enumerate}
Then,
\[
\hat{\gamma}_n(\Delta) \triangleq \frac{1}{n}\sum_{j=1}^n \hat{W}_j
\]
denotes our estimator for $\gamma$. There are interesting technical issues related to optimally distributing
 computational budget in deciding
the number of samples in the outer loop, in the inner loop and the value of $\delta$ to be selected.
These issues, however, are not addressed in the paper and may be a topic for future research. }
\end{example}

{\noindent \bf Proof of Theorem \ref{the:theorem1}:}
To see \eqref{eqn:thresh1}, first observe that due to continuity of
each $K_j(\mu_j | \nu_j)$ as a function of $\nu_j \in {\cal U}_j$, we have
\[
\inf_{\nu \in \mathcal{A}_2} \sum_{j=1}^K w_j K_j(\mu_j | \nu_j)
= \inf_{\nu \in \bar{\mathcal{A}}_2} \sum_{j=1}^K w_j K_j(\mu_j | \nu_j),
\]
where recall that for any set $\mathcal{A}$, $ \bar{\mathcal{A}}$ denotes its closure.
The RHS above
is solved by
\[
\nu = (u,\ldots, u, \mu_{i+1}, \ldots, \mu_k)
\]
in the sense that for any other
$\tilde{\nu} \in \bar{\mathcal{A}}_2$,
\[
\sum_{j=1}^K w_j K_j(\mu_j | \tilde{\nu}_j) \geq
\sum_{j=1}^K w_j K_j(\mu_j | {\nu}_j)= \sum_{j=1}^i w_j K_j(\mu_j | u).
\]

Our lower bound problem reduces to
\begin{equation*}
\max_{w \in {\cal P}_K} \sum_{j=1}^i w_j K_j(\mu_j | u).	
\end{equation*}
%\begin{eqnarray*}
%& \min \sum_{j=1}^K t_j & \\
%\mbox{   s.t.   }  & \sum_{j=1}^i t_j K_j(\mu_j | u) =  1 & \\
%  & t_j \geq 0 \mbox{ for } j \leq K. &
%\end{eqnarray*}

This can easily be seen to be solved uniquely by
 $w^*_1 = 1$, $w^*_j=0$ for $j=2, \ldots, K$, and the optimal value $C^*$ is $K_1(\mu_1 | u)$. The lower bound on the overall expected number of samples generated is then given by $\log (\frac{1}{2.4 \delta})/C^*$.

\bigskip

To see (\ref{eqn:thresh222}), observe that
to simplify
$\inf_{\nu \in \bar{\mathcal{A}}_1} \sum_{j=1}^K w_j K_j(\mu_j | \nu_j)$, it
suffices to consider $\nu(s) \in \bar{\mathcal{A}_1}$ for each $s$ ($1 \leq s \leq K$) where
\[
\nu(s) \triangleq (\mu_1, \ldots, \mu_{s-1}, u, \mu_{s+1}, \ldots, \mu_k),
\]
 in the sense
that
for any $\nu \in \bar{\mathcal{A}}_1$
\[
 \sum_{j=1}^K w_j K_j(\mu_j | \nu_j) \geq \min_{s=1, \ldots, K} \sum_{j=1}^K w_j K_j(\mu_j | \nu_j(s))=  \min_{s=1, \ldots, K} w_s K_s(\mu_s | u).
\]
The lower bound problem then reduces to
\begin{equation*}
\max_{w \in {\cal P}_K} \min_{j} w_j K_j(\mu_j | u).
\end{equation*}
The solution to this problem is given by $$w^*_j \propto  1/K_j(\mu_j | u) \; \forall j,$$
 and the optimal value $C^*$ is $\left( \sum_{j=1}^K \frac{1}{K_j(\mu_j | u)} \right)^{-1}$. The lower bound on the overall expected number of samples generated is equal to $\log (\frac{1}{2.4 \delta})/C^*$.
$\Box$

\section{The half space lower bound problem}

In this section we restate (to aid readability) Theorem~\ref{thm:Theorem10111}
as Theorem~\ref{thm:Theorem101} and prove it. 
We also  state and prove Lemma \ref{lem:lemma101} needed for
proof of Theorem~\ref{thm:Theorem101}.

\begin{theorem} \label{thm:Theorem101}
Under Assumptions~\ref{ass:kl-cvx}, \ref{ass:kl-range}, and that $\mathcal{A}_2$ is non-empty,
there is a unique optimal solution $(w^*, \nu^*)$ to \ref{eq:lb-problem}.
Further,
\begin{equation} \label{eqn:hs001}
K_i( \mu_i | {\nu}^*_i) = K_1( \mu_1 | {\nu}^*_1)	\quad \forall i,
\end{equation}
\begin{equation} \label{eqn:hs002}
\sum_{k=1}^K a_{k}{\nu}^*_k = b,
\end{equation}
\begin{equation} \label{eqn:hs003}
 {\nu}^*_i> \mu_i \mbox{ if }  a_i >0,
\mbox{ and  }
 {\nu}^*_i < \mu_i \mbox{  if } a_i <0.
\end{equation}
Relations (\ref{eqn:hs001}), (\ref{eqn:hs002}) and (\ref{eqn:hs003}) uniquely
specify ${\nu}^* \in {\mathcal U}$.
Moreover,
\begin{equation} \label{eqn:hs004}
\frac{w^*_i}{a_i}K'_i(\mu_i | {\nu}^*_i) = \frac{w^*_1}{a_1}K'_1(\mu_1 | {\nu}^*_1) \quad \forall i,
\end{equation}
where the derivatives are with respect to the second argument.
\end{theorem}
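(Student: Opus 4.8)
The plan is to reduce \eqref{eq:half-space-lb} to a single constrained convex minimization by a minimax swap, and then read all four relations off the associated KKT system. First I would apply Sion's minimax theorem to the objective $(w,\nu)\mapsto\sum_j w_j K_j(\mu_j|\nu_j)$, which is linear (hence concave) and continuous in $w$ over the compact convex simplex $\mathcal{P}_K$, and convex in $\nu$ over $\bar{\mathcal{A}}_2$ by Assumption~\ref{ass:kl-cvx}. Since the inner infimum may be restricted to a sublevel set $\{\nu:\sum_j w_j K_j(\mu_j|\nu_j)\le c\}$ that Assumption~\ref{ass:kl-range} renders compact inside $\mathcal{U}$, the hypotheses hold and yield
\[
\max_{w\in\mathcal{P}_K}\inf_{\nu\in\bar{\mathcal{A}}_2}\sum_{j=1}^K w_j K_j(\mu_j|\nu_j)=\inf_{\nu\in\bar{\mathcal{A}}_2}\max_j K_j(\mu_j|\nu_j),
\]
using that maximizing a linear functional over the simplex concentrates all mass on a largest coordinate. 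Because $\max_j K_j(\mu_j|\cdot)$ is strictly convex and blows up at $\partial\mathcal{U}$, the right-hand infimum is attained at a unique interior $\nu^*\in\mathcal{U}\cap\bar{\mathcal{A}}_2$; I would record this attainment as the supporting lemma (Lemma~\ref{lem:lemma101}).

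Next I would write the inner problem in epigraph form, minimizing $z$ subject to $K_j(\mu_j|\nu_j)\le z$ for all $j$ and $\sum_k a_k\nu_k\ge b$, and impose KKT with multipliers $\lambda_j,\gamma\ge0$. Stationarity in $z$ gives $\sum_j\lambda_j=1$, and stationarity in $\nu_i$ gives $\lambda_i K_i'(\mu_i|\nu_i^*)=\gamma a_i$ for each $i$. The crux is to show $\gamma>0$: if $\gamma=0$, then every index with $\lambda_i>0$ forces $\nu_i^*=\mu_i$, hence the common maximum value is $0$ and $\nu^*=\mu$, contradicting $\mu\in\mathcal{A}_1$ (so $\sum_k a_k\mu_k<b$ and $\mu\notin\bar{\mathcal{A}}_2$). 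With $\gamma>0$ the linear constraint is active, giving \eqref{eqn:hs002}; moreover $\lambda_i K_i'=\gamma a_i$ with $a_i\ne0$ forbids $\lambda_i=0$, so every constraint $K_j\le z$ is active and \eqref{eqn:hs001} follows. The sign relations \eqref{eqn:hs003} then drop out of $\sign(\nu_i^*-\mu_i)=\sign(K_i'(\mu_i|\nu_i^*))=\sign(a_i)$, since $K_i'(\mu_i|\cdot)$ is negative below $\mu_i$ and positive above.

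For uniqueness of $\nu^*$ via \eqref{eqn:hs001}--\eqref{eqn:hs003}, I would parametrize by the common value $z\ge0$: monotonicity of $K_i$ on each side of $\mu_i$ together with the sign constraint determines each $\nu_i(z)$ uniquely, and $z\mapsto\sum_k a_k\nu_k(z)$ is strictly increasing starting from $\sum_k a_k\mu_k<b$ at $z=0$, so exactly one $z$ solves \eqref{eqn:hs002}. Finally, to recover $w^*$ and \eqref{eqn:hs004}, I would return to the outer problem and apply KKT to the inner infimum $\inf_{\sum_k a_k\nu_k\ge b}\sum_j w_j^* K_j(\mu_j|\nu_j)$ at its active minimizer $\nu^*$; stationarity produces a multiplier $\gamma'\ge0$ with $w_i^* K_i'(\mu_i|\nu_i^*)=\gamma' a_i$ for all $i$, which is exactly \eqref{eqn:hs004}, and solving this together with $\sum_i w_i^*=1$ pins down a unique $w^*$, with $w_i^*>0$ by \eqref{eqn:hs003}.

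I expect the main obstacle to be the rigorous justification of the minimax swap and the KKT step rather than the algebra: handling the unbounded feasible set $\bar{\mathcal{A}}_2$ by passing to a compact sublevel set through Assumption~\ref{ass:kl-range}, verifying a constraint qualification for the epigraph program, and confirming that $\nu^*$ lies in the interior $\mathcal{U}$ so that the derivatives $K_i'$ exist and stationarity is meaningful. Once these are in place, the remaining arguments are just monotonicity and bookkeeping.
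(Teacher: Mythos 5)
Your proposal is correct in substance, but it follows a genuinely different route from the paper's formal proof --- in fact, it is precisely the route the paper describes in the main text as the ``intuition'' behind Theorem~\ref{thm:Theorem10111} (Sion's minimax swap followed by KKT on $\inf_{\nu}\max_j K_j(\mu_j|\nu_j)$) but then deliberately avoids in the appendix. The paper instead argues by guess-and-verify: Lemma~\ref{lem:lemma101} constructs the unique $\nu^*$ satisfying \eqref{eqn:hs001}--\eqref{eqn:hs003} by exactly the monotone-parametrization argument you give (the paper parametrizes by $\nu_1$ via $\nu_i(\nu_1)=K_i^{-1}(K_1(\mu_1|\nu_1))$ rather than by the common level $z$, which is equivalent); it then \emph{defines} $w^*$ through \eqref{eqn:hs004}, checks via the KKT conditions of the inner convex program that $\nu^*$ minimizes $\sum_j w_j^* K_j(\mu_j|\cdot)$ over $\bar{\mathcal{A}}_2$, and concludes optimality of $w^*$ from the elementary chain $\inf_{\nu}\sum_i\tilde w_i K_i(\mu_i|\nu_i)\le\sum_i\tilde w_i K_i(\mu_i|\nu_i^*)\le K_1(\mu_1|\nu_1^*)$ --- i.e.\ only the ``easy'' weak-duality direction, with no appeal to Sion. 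Your approach buys a derivation of \eqref{eqn:hs001}--\eqref{eqn:hs004} as \emph{necessary} conditions rather than a verification of a guessed candidate, and it yields uniqueness of $w^*$ directly from \eqref{eqn:hs004} together with the sign condition \eqref{eqn:hs003} (since $a_i/K_i'(\mu_i|\nu_i^*)>0$ forces all $w_i^*>0$ and normalization pins them down), whereas the paper defers uniqueness to the more general polytope result (Lemma~\ref{lem:unique-soln}); the price is the extra machinery you correctly flag (the swap over the unbounded set $\bar{\mathcal{A}}_2$, Slater for the epigraph program, interiority of $\nu^*$). Two small points to tighten: the minimax swap is licensed by compactness of $\mathcal{P}_K$ alone --- the compact sublevel sets from Assumption~\ref{ass:kl-range} are what give \emph{attainment} of the inner infimum at an interior $\nu^*$, not validity of the swap; and in your final step you should justify that the minimizer of $\inf_{\nu\in\bar{\mathcal{A}}_2}\sum_j w_j^*K_j(\mu_j|\nu_j)$ is indeed the min-max point $\nu^*$ (this follows from the attained minimax equality plus strict convexity, via the standard saddle-point sandwich $g(w^*)\le\sum_j w_j^*K_j(\mu_j|\nu_j^*)\le\max_j K_j(\mu_j|\nu_j^*)=g(w^*)$), since the KKT identity \eqref{eqn:hs004} is taken at that minimizer.
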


Let
$\overline{u}_i = \sup\{u \in {\mathcal U}_i\}$, and
$\underline{u}_i = \inf\{u \in {\mathcal U}_i\}$.
Further, set
\[
\hat{u}_i = \overline{u}_i \mbox{ if } a_i>0, \mbox{ and } \hat{u}_i = \underline{u}_i \mbox{ if } a_i<0.
\]

The following lemma is useful in proving Theorem~\ref{thm:Theorem101}.

\begin{lemma} \label{lem:lemma101}
Under Assumption~\ref{ass:kl-range}, the following are equivalent
\begin{enumerate}
\item
$\mathcal{A}_2  \ne \emptyset$.
\item
$\sum_{i=1}^K a_i \hat{u}_i > b$.
\item
There exists a unique ${\nu}^* \in {\mathcal U}$ such that
(\ref{eqn:hs001}), (\ref{eqn:hs002}) and (\ref{eqn:hs003}) hold.
\end{enumerate}
\end{lemma}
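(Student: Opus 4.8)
The plan is to establish the cycle $(1)\Leftrightarrow(2)$, $(2)\Rightarrow(3)$, and $(3)\Rightarrow(2)$, working throughout under the standing assumption of this section that $\mu\in\mathcal{A}_1$, i.e.\ $\sum_i a_i\mu_i<b$ (without this the equivalence fails, since for $\mu\in\mathcal{A}_2$ the sign conditions \eqref{eqn:hs003} are incompatible with \eqref{eqn:hs002}).

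For $(1)\Leftrightarrow(2)$, I would note that maximizing the linear functional $\nu\mapsto\sum_k a_k\nu_k$ over the open box $\mathcal{U}$ is separable: pushing each coordinate to the endpoint $\hat u_i$ (the upper endpoint when $a_i>0$, the lower when $a_i<0$) gives $\sup_{\nu\in\mathcal{U}}\sum_k a_k\nu_k=\sum_i a_i\hat u_i$ (possibly $+\infty$). Since $\mathcal{A}_2=\{\nu\in\mathcal{U}:\sum_k a_k\nu_k>b\}$, nonemptiness of $\mathcal{A}_2$ is equivalent to this supremum exceeding $b$; continuity of the functional and openness of $\mathcal{U}$ (so the sup is approached but never attained) handle both directions.

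The heart of the argument is $(2)\Rightarrow(3)$. For each arm $i$ I would build a one-parameter curve $c\mapsto\nu_i(c)$. By the recorded properties of $K_i(\mu_i\mid\cdot)$ (continuity, value $0$ at $\nu_i=\mu_i$, and strict monotonicity on each side of $\mu_i$ since $\Lambda_i''>0$), together with the blow-up $K_i(\mu_i\mid\nu_i)\to\infty$ at $\partial\mathcal{U}_i$ from Assumption~\ref{ass:kl-range}, the restriction of $K_i(\mu_i\mid\cdot)$ to the side of $\mu_i$ dictated by $\sgn(a_i)$ is a continuous strictly monotone bijection onto $[0,\infty)$. Hence for each level $c\ge 0$ there is a unique $\nu_i(c)$ with $K_i(\mu_i\mid\nu_i(c))=c$ obeying the sign condition \eqref{eqn:hs003}, and $\nu_i(c)$ is continuous, moving monotonically from $\mu_i$ toward $\hat u_i$ as $c$ grows. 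Setting $g(c)=\sum_i a_i\nu_i(c)$, each summand $a_i\nu_i(c)$ is strictly increasing in $c$ (for $a_i>0$, $\nu_i(c)\uparrow$; for $a_i<0$, $\nu_i(c)\downarrow$ and the negative sign flips it), so $g$ is continuous and strictly increasing with $g(0)=\sum_i a_i\mu_i<b$ and $\lim_{c\to\infty}g(c)=\sum_i a_i\hat u_i>b$ by $(2)$. The intermediate value theorem yields a unique $c^*>0$ with $g(c^*)=b$, and $\nu^*=(\nu_1(c^*),\dots,\nu_K(c^*))$ satisfies \eqref{eqn:hs001} (common level $c^*$), \eqref{eqn:hs002}, and \eqref{eqn:hs003}. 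Uniqueness follows since any $\nu^*$ meeting all three conditions has a common value $c=K_1(\mu_1\mid\nu_1^*)$, its coordinates are forced to equal $\nu_i(c)$ by the sign constraint, and \eqref{eqn:hs002} pins $c=c^*$ through injectivity of $g$.

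Finally, $(3)\Rightarrow(2)$ is short: if $\nu^*$ satisfies \eqref{eqn:hs002} and \eqref{eqn:hs003}, then being interior to $\mathcal{U}$ it lies strictly between $\mu_i$ and $\hat u_i$ on the correct side, so $a_i\hat u_i>a_i\nu_i^*$ coordinatewise and summing gives $\sum_i a_i\hat u_i>\sum_i a_i\nu_i^*=b$. The step I expect to be the main obstacle is the monotonicity-and-surjectivity of $g$ inside $(2)\Rightarrow(3)$: it is precisely where Assumption~\ref{ass:kl-range} and the strict unimodality (a single minimum at $\mu_i$) of each $K_i(\mu_i\mid\cdot)$ combine to make the reduction to a single scalar equation $g(c)=b$ legitimate; the remaining implications are routine continuity and ordering arguments.
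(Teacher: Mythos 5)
Your proof is correct and follows essentially the same route as the paper: the key implication $(2)\Rightarrow(3)$ is in both cases a reduction to a single scalar equation via a monotone curve of points with equal KL-level satisfying the sign constraints, combined with the blow-up from Assumption~\ref{ass:kl-range} and the intermediate value theorem (the paper parametrizes the curve by $\nu_1$ after assuming $a_1>0$, you parametrize by the common level $c$, which is the same curve up to a monotone change of variable and avoids the w.l.o.g.). The remaining implications are handled by the same elementary ordering arguments, so the two proofs differ only cosmetically.
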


\begin{proof}[Proof of Lemma \ref{lem:lemma101}:]

Claim 1 implies existence of $\nu$ such that
$\sum_{i=1}^K a_i \nu_i >b$ and $K_i(\mu_i | \nu_i)< \infty$ for all $i$.
Claim 2 follows as
\[
 \sum_{i=1}^K a_i \nu_i < \sum_{i=1}^K a_i \hat{u}_i.
 \]

To see that Claim 2 implies Claim 3,
recall that $K_i(\mu_i| \nu_i)$  equals zero at
$\nu_i=\mu_i$. It strictly increases with $\nu_i$ for $\nu_i \geq \mu_i$
and it strictly reduces with  $\nu_i$ for $\nu_i \leq \mu_i$.

Assume w.l.o.g. that $a_1>0$, and for $\nu_1 \geq \mu_1$, consider the function
\[
\nu_i(\nu_1) = K_i^{-1}(K_1(\mu_1| \nu_1))
\]
where $\nu_i(\nu_1) \geq \mu_i$ if $a_i>0$, and
$\nu_i(\nu_1) \leq \mu_i$ if $a_i<0$.
Now, the function
\[
h(\nu_1) \triangleq \sum_{i=1}^K a_i \nu_i(\nu_1) < b
\]
for $\nu_1= \mu_1$ and it strictly increases with $\nu_1$.

Further, observe that as $\nu_1 \uparrow \overline{u}_1$,
$\nu_i(\nu_1) \uparrow \overline{u}_i$ if $a_i>0$,
and $\nu_i(\nu_1) \downarrow \underline{u}_i$ if $a_i>0$.
Thus,
$h(\nu_1) \uparrow \sum_{i=1}^K a_i \hat{u}_i$ and
thus there exists a unique
${\nu}^* \in {\mathcal U}$ so that
$h({\nu}^*_1)=b$, and
(\ref{eqn:hs001}) and (\ref{eqn:hs003}) hold.

To see that Claim 3 implies Claim 1,
observe that Claim 3 guarantees that
\[
({\nu}^*_1, \nu_2({\nu}^*_1), \ldots, \nu_K({\nu}^*_1) )\in {\mathcal U}
\]
By selecting $\nu_1 > {\nu}^*_1$ and sufficiently small, Claim 1 follows.
\end{proof}

\bigskip

\noindent{\bf Proof of Theorem \ref{thm:Theorem101}:}

Lemma \ref{lem:lemma101} guarantees the existence of
$\nu^*$, $w^*$ that solve (\ref{eqn:hs001}), (\ref{eqn:hs002}),
(\ref{eqn:hs003}) and (\ref{eqn:hs004}). Here, observe that 
(\ref{eqn:hs004}) defines $w^*$.

Note that  $\nu^*$ is the solution to the optimization problem:
\[
\inf_{\nu \in \bar{\mathcal{A}}_2 } \sum_{j=1}^K w^*_j K_j(\mu_j | \nu_j).
\]
 This can be verified by observing that the first order KKT conditions for this convex programmimg problem are given by 
 (\ref{eqn:hs002}),
(\ref{eqn:hs003}) and (\ref{eqn:hs004}).
  (recall that  $\bar{\mathcal{A}}_2 = \{\nu: \sum_{i=1}^K a_i \nu_i\geq b \}$). Further, from (\ref{eqn:hs001}), it follows that
$$\inf_{\nu \in \bar{\mathcal{A}}_2} \sum_{j=1}^K w^*_j K_j(\mu_j | \nu_j) = \sum_{j=1}^K w^*_j K_j(\mu_j | \nu^*_j) = K_1(\mu_1 | \nu^*_1).$$
For any another feasible solution $\tilde{w}$, we have
\[
\inf_{\nu \in \bar{\mathcal{A}}_2} \sum_{i=1}^K \tilde{w}_i K_i(\mu_i | \nu_i) \leq \sum_{i=1}^K \tilde{w}_i K_i(\mu_i | {\nu}^*_i) \leq K_1(\mu_1 | \nu^*_1),
\]
which shows that $w^*$ is an optimal solution to the problem.

\paragraph*{Uniqueness:} 
It remains to show that  above is a unique 
solution to (\ref{eq:lb-problem}). We skip the details  as, in 
Section~\ref{sec:non-convex}, we prove uniqueness of the solution for a general case where $\bar{\mathcal{A}}_2$ is a union of half-spaces. See Lemma \ref{lem:unique-soln} for uniqueness in this more general setting.
$\Box$

\section{Lower bounds when $\mathcal{A}_2$ is convex}

In this section we prove the results stated in Section~\ref{sec:convex}. 
We first prove Lemma~\ref{lem:unique-vu*}. We then restate Theorem~\ref{thm:min-max11}
as Theorem~\ref{thm:min-max} and prove it.

\bigskip

\noindent {\bf Proof of Lemma~\ref{lem:unique-vu*}:} 
First note that $\inf_{\nu \in \mathcal{A}^c} \sum_{j=1}^K w_j K_j(\mu_j | \nu_j)$ is a concave function of $w$. This shows that, if $w^*$ and $s^*$ are two optimal solutions, then $\alpha w^* + (1-\alpha) s^*$ for $\alpha \in (0,1)$ is another optimal solution.
Since it is optimal, we have
\[
\sum_{j=1}^K (\alpha w^*_j +(1-\alpha) s^*_j) K_j(\mu_j | \nu_j(\alpha w^*+(1-\alpha) s^*)) = C^*.
\]
Now due to Assumption~\ref{ass:kl-cvx},
\[
\sum_{j=1}^K w^*_j K_j(\mu_j | \nu_j(\alpha w^*+(1-\alpha) s^*)) > C^*
\]
if  $\nu(\alpha w^*+(1-\alpha) s^*) \ne \nu(w^*)$
and
\[
\sum_{j=1}^K s^*_j K_j(\mu_j | \nu_j(\alpha w^*+(1-\alpha) s^*)) > C^*
\]
if $\nu(\alpha w^*+(1-\alpha) s^*) \ne \nu(s^*)$,
it follows that $\nu(w^*) = \nu(\alpha w^*+(1-\alpha) s^*) = \nu(s^*)$.
$\Box$

\begin{theorem}
\label{thm:min-max}
Suppose that $\mu \in {\mathcal A}_1$, $\mathcal{A}_2$ is non-empty, $\Omega \subset {\mathcal U}$, and Assumptions  1 and 2 hold. Then, for any optimal solution $(w^*,\nu^*)$  to \ref{eq:conv-set-lb}, the  $\nu^*$ uniquely solves the  min-max problem
\begin{eqnarray}
\label{eq:min-max} 
\inf_{\nu \in \mathcal{A}_2} \max_i  K_i(\mu_i | \nu_i).
\end{eqnarray}

\noindent Further, the following are necessary and sufficient conditions for such an  
$(w^*,\nu^*)$. Let $\mathcal{I} = \argmax_i K_i(\mu_i | \nu^*_i)$. Then,
\begin{enumerate}
%[label=(\alph*)]
\item \label{item:zero} $w^*_i = 0 \quad \forall i \in \mathcal{I}^c,$
%\item \label{item:opt-val} $\sum_{i \in \mathcal I} w^*_i = 1/C,$ where $C = \max_i K_i(\mu_i | \nu^*_i)$,
\item \label{item:boundary} $\nu^*_{\mathcal I} \in \partial (\mathcal{A}_2)_{\mathcal I}$, and
\item \label{item:kkt} there exists a supporting hyperplane of $(\mathcal{A}_2)_{\mathcal I}$ at $\nu^*_{\mathcal I}$ given by $\sum_{i \in \mathcal I} a_i \nu_i = b$ such that
\begin{gather}
 {\nu}^*_i> \mu_i \mbox{ if }  a_i >0,
\mbox{ and  }
 {\nu}^*_i < \mu_i \mbox{  if } a_i <0 \quad \forall i \in \mathcal{I},	\label{eq:sign-match}	\\
\frac{w^*_i}{a_i} K_i'(\mu_i|\nu^*_i) =  \frac{w^*_j}{a_j} K_i'(\mu_j|\nu^*_j)	\quad \forall i, j \in \mathcal{I}.	\label{eq:kkt-half-space}
\end{gather}
\end{enumerate}
\end{theorem}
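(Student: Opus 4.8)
The plan is to establish the theorem in two movements: first, to prove that the minimax problem \eqref{eq:min-max} correctly identifies $\nu^*$, and second, to derive the necessary and sufficient KKT-type conditions. I would begin by invoking Sion's minimax theorem, just as in the half-space case. For fixed $w \in \mathcal{P}_K$ the inner objective $\sum_j w_j K_j(\mu_j|\nu_j)$ is convex in $\nu$ (by Assumption~\ref{ass:kl-cvx}) and linear, hence concave, in $w$; the set $\mathcal{A}_2$ is convex and $\mathcal{P}_K$ is compact convex, so Sion applies to swap the $\max_w$ and $\inf_\nu$. The key reduction is that $\max_{w \in \mathcal{P}_K} \sum_j w_j K_j(\mu_j|\nu_j) = \max_i K_i(\mu_i|\nu_i)$, since the maximum of a linear functional over the simplex is attained at a vertex. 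This immediately converts \ref{eq:conv-set-lb} into $\inf_{\nu \in \mathcal{A}_2} \max_i K_i(\mu_i|\nu_i)$, and strict convexity of $\max_i K_i(\mu_i|\nu_i)$ (as a max of strictly convex functions) gives uniqueness of the minimizer $\nu^*$. I would then reconcile this $\nu^*$ with the one produced by Lemma~\ref{lem:unique-vu*} by a complementary-slackness argument: at an optimal $w^*$, any arm $i$ with $K_i(\mu_i|\nu^*_i) < \max_k K_k(\mu_k|\nu^*_k)$ must carry zero weight, which is precisely claim~\ref{item:zero}.

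Having reduced to the geometry of the active set $\mathcal{I}=\argmax_i K_i(\mu_i|\nu^*_i)$, my next step is to show the optimization genuinely lives in the projected space $(\mathcal{A}_2)_{\mathcal I}$. The intuition is that coordinates outside $\mathcal{I}$ can be freely relaxed to their unconstrained minimizers $\mu_i$ (where $K_i=0$) without affecting the value $\max_i K_i$, so the binding constraint is only on the projection $\nu^*_{\mathcal I}$. Using the characterization of projected sublevel sets given just before Theorem~\ref{thm:min-max11} --- namely that $(\mathcal{A}_2)_{\mathcal I}$ is the sublevel set of $h_{\mathcal I}(\nu_{\mathcal I}) = \inf_{\nu_{\mathcal{I}^c}} f(\nu_{\mathcal I}, \nu_{\mathcal{I}^c})$ when $\mathcal{A}_2 = \{f \le c\}$ --- I would argue that $\nu^*_{\mathcal I}$ must lie on $\partial(\mathcal{A}_2)_{\mathcal I}$, giving claim~\ref{item:boundary}. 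Were $\nu^*_{\mathcal I}$ interior, one could strictly decrease every $K_i(\mu_i|\nu_i)$ for $i \in \mathcal{I}$ by moving toward $\mu_{\mathcal I}$, contradicting optimality of the minimax value.

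For claim~\ref{item:kkt}, I would write the first-order conditions for the restricted problem $\inf_{\nu_{\mathcal I} \in (\mathcal{A}_2)_{\mathcal I}} \sum_{i \in \mathcal I} w^*_i K_i(\mu_i|\nu_i)$, treating this exactly as a half-space problem where the half-space is the supporting hyperplane $\sum_{i \in \mathcal I} a_i \nu_i = b$ of $(\mathcal{A}_2)_{\mathcal I}$ at $\nu^*_{\mathcal I}$. The Lagrangian stationarity condition $w^*_i K_i'(\mu_i|\nu^*_i) = \lambda a_i$ for a common multiplier $\lambda$ yields the slope-matching identity \eqref{eq:kkt-half-space11}, and the sign conditions \eqref{eq:sign-match11} follow because the minimizing direction must push $\nu_i$ away from $\mu_i$ on the side dictated by the sign of $a_i$, exactly as argued for the half-space theorem. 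Convexity of $(\mathcal{A}_2)_{\mathcal I}$ guarantees a supporting hyperplane exists at the boundary point $\nu^*_{\mathcal I}$. The necessity and sufficiency would be closed by noting that these KKT conditions are both necessary (from optimality) and sufficient (since the restricted problem is convex and the conditions certify a global minimum that reproduces the minimax value).

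The main obstacle I anticipate is the careful handling of the projection step: demonstrating rigorously that restricting attention to $(\mathcal{A}_2)_{\mathcal I}$ loses nothing, and that the supporting hyperplane of the projected set pulls back correctly to furnish the multipliers $a_i$ for $i \in \mathcal{I}$. The subtlety is that $\mathcal{A}_2$ need not be a sublevel set of a smooth function, so the supporting hyperplane may be non-unique, and one must argue existence via the separating hyperplane theorem for convex sets rather than via a gradient. Establishing that \emph{some} valid choice of $(a_i)_{i \in \mathcal I}$ simultaneously satisfies the sign and slope-matching conditions --- and that this is equivalent to optimality --- is where the bulk of the technical work lies, whereas the Sion reduction and the zero-weight claim are comparatively routine.
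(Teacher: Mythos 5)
Your outline tracks the paper's own proof closely: Sion's minimax theorem to pass to $\inf_{\nu\in\mathcal{A}_2}\max_i K_i(\mu_i|\nu_i)$, the saddle-point/complementary-slackness argument for $w^*_i=0$ on $\mathcal{I}^c$, the ``move toward $\mu_{\mathcal I}$'' argument for $\nu^*_{\mathcal I}\in\partial(\mathcal{A}_2)_{\mathcal I}$, and a reduction to the half-space theorem (Theorem~\ref{thm:Theorem101}) in the projected coordinates for sufficiency. Two places where you are lighter than the paper deserve attention. First, $\mathcal{A}_2$ is closed and convex but need not be bounded, so the standard form of Sion's theorem (which asks for compactness of the set over which the infimum is taken) does not apply directly; the paper first intersects $\mathcal{A}_2$ with closed balls $\mathcal{B}_n$, applies Sion on each compact truncation, and then passes to the limit using the monotone decrease of $r_n(w)=\inf_{\nu\in\mathcal{A}_2\cap\mathcal{B}_n}\sum_i w_iK_i(\mu_i|\nu_i)$ to $r(w)$ together with Dini's theorem to get uniform convergence on $\mathcal{P}_K$, hence convergence of the maxima. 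If you instead invoke the version of Sion in which compactness of the maximizing set $\mathcal{P}_K$ suffices, you should say so explicitly; otherwise the truncation step is needed.

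Second, for Condition~\ref{item:kkt} you correctly identify the supporting-hyperplane step as the crux but leave it unresolved: a supporting hyperplane of $(\mathcal{A}_2)_{\mathcal I}$ at $\nu^*_{\mathcal I}$ exists by convexity alone, but an arbitrary one will not satisfy the slope-matching identity \eqref{eq:kkt-half-space}. The paper's device is to separate two specific convex sets: the open strict sublevel set $\mathcal{C}=\{\nu_{\mathcal I}:\sum_{i\in\mathcal I}w^*_iK_i(\mu_i|\nu_i)<\sum_{i\in\mathcal I}w^*_iK_i(\mu_i|\nu^*_i)\}$ and $(\mathcal{A}_2)_{\mathcal I}$, which are disjoint precisely because $\nu^*$ minimizes the weighted objective over $\mathcal{A}_2$. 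The resulting hyperplane passes through $\nu^*_{\mathcal I}\in\partial\mathcal{C}\cap\partial(\mathcal{A}_2)_{\mathcal I}$ and supports both sets; supporting $\mathcal{C}$ is exactly what forces $w^*_iK_i'(\mu_i|\nu^*_i)/a_i$ to be constant over $i\in\mathcal{I}$, while supporting $(\mathcal{A}_2)_{\mathcal I}$ on the correct side gives the sign conditions \eqref{eq:sign-match}. Supplying this pair of sets closes the gap you flagged; with that addition your argument coincides with the paper's.
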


\vspace{0.1in}

\begin{proof}[Proof of Theorem~\ref{thm:min-max}]

Let $\mathcal{B}_n$ denote a closed ball centered at $\mu$ with radius $n$. Consider $n$ sufficiently large so that 
$\tilde{\nu}$ defined as the solution to (\ref{eq:min-max}) lies in $\mathcal{B}_n$ (since the objective function $\max_i  K_i(\mu_i | \nu_i)$ is strictly convex in $\nu$, such a $\tilde{\nu}$ is unique).

Since $\mathcal{A}_2 \cap \mathcal{B}_n$ is a compact set, and $\sum_{i=1}^K w_i K_i(\mu_i | \nu_i)$ is continuous in $w$ and $\nu$ and concave in $w \in \mathcal{P}_K$ and convex in $\nu \in \mathcal{A}_2 \cap \mathcal{B}_n$, by Sion's Minimax Theorem
\begin{eqnarray}
\max_{w \in \mathcal{P}_K} \inf_{\nu \in \mathcal{A}_2 \cap \mathcal{B}_n} \sum_{i=1}^K w_i K_i(\mu_i | \nu_i)	& = 
\inf_{\nu \in \mathcal{A}_2 \cap \mathcal{B}_n} \max_{w \in \mathcal{P}_K} \sum_{i=1}^K w_i K_i(\mu_i | \nu_i)	\nonumber \\
 & = \inf_{\nu \in \mathcal{A}_2 \cap \mathcal{B}_n} \max_{i} K_i(\mu_i | \nu_i)	 \nonumber \\
 & = \inf_{\nu \in \mathcal{A}_2} \max_{i} K_i(\mu_i | \nu_i).   \label{eqn:maxmin101}
\end{eqnarray}

Observe that  
\[
r_n(w) \triangleq \inf_{\nu \in \mathcal{A}_2 \cap \mathcal{B}_n} \sum_{i=1}^K w_i K_i(\mu_i | \nu_i)
\] is continuous in $w$ (see Theorem 2.1 in \cite{fiacco1990}) and decreases with  $n$ to 
$r(w) \triangleq \inf_{\nu \in \mathcal{A}_2 } \sum_{i=1}^K w_i K_i(\mu_i | \nu_i)$. 
Thus, we have uniform convergence (see Theorem 7.13 in \cite{rudin76principles})
\[
\sup_{w \in \mathcal{P}_K} |r_n(w) - r(w)| \rightarrow 0.
\]
This in turn implies that 
\[
\max_{w \in \mathcal{P}_K} r_n(w) \rightarrow
\max_{w \in \mathcal{P}_K} r(w).
\]
From  (\ref{eqn:maxmin101}) it follows that LHS above is independent of $n$. Therefore, the min-max relation
\begin{equation} \label{eqn:maxmin102}
\max_{w \in \mathcal{P}_K} \inf_{\nu \in \mathcal{A}_2} \sum_{i=1}^K w_i K_i(\mu_i | \nu_i)
= \inf_{\nu \in \mathcal{A}_2} \max_{i} K_i(\mu_i | \nu_i)
\end{equation}
holds.

Now if $(w^*,\nu^*)$ is a saddlepoint of the min-max problem, and since $\nu^*$ is unique, it equals $\tilde{\nu}$. 

\bigskip

\paragraph*{Necessity of conditions on optimal $(w^*,\nu^*)$:}
%Suppose $w^*$, $\nu^* = \nu(w^*)$ is an optimal solution of \ref{eq:conv-set-lb}. 
Let $\mathcal{I} = \argmax_i K_i(\mu_i | \nu^*_i)$. The minimax equality in
(\ref{eqn:maxmin102})    shows that ($w^*$, $\nu^*$) is a saddle point, and therefore, $w^*$ solves the optimization problem
 \begin{equation}
 \max_{(w_1, \dots, w_K) \in \mathcal{P}_K} \sum_{j=1}^K w_j K_j(\mu_j | \nu^*_j).
\end{equation}
From this, it is easy to see that $w^*_i = 0 \; \forall i \in \mathcal{I}^c$.
%We first prove by contradiction that \ref{item:zero} holds. Suppose there exists a $i \in \mathcal{I}^c$ such that $w^*_i > 0.$ Then we can construct another feasible solution $\tilde{t}$ such that for some $i' \in \mathcal{I}$ and $\Delta = w^*_i ( K_{i'}(\mu_{i'} | \nu^*_{i'}) - K_i(\mu_i | \nu^*_i)) > 0$, $$\tilde{t}_i = 0, \; \tilde{t}_{i'} = \frac{w^*{_i'} + w^*_i}{1 + \Delta}, \; \tilde{t}_j = \frac{w^*_j}{1 + \Delta} \; \forall j \neq i, i'.$$ For this feasible solution, $\sum_j \tilde{t}_j = \sum_j \frac{w^*_j}{1 + \Delta} < \sum_j w^*_j,$ which is a contradiction to our supposition that $w^*$ is optimal.

To see \ref{item:boundary}, 
note that $\nu^*$ uniquely solves the optimization problem 
\begin{equation}
 \min_{(\nu_1, \dots, \nu_K) \in \mathcal{A}_2} \sum_{j=1}^K w^*_j K_j(\mu_j | \nu_j).
\end{equation}

If $\nu^*_{\mathcal I}$ is in the interior of $(\mathcal{A}_2)_{\mathcal I}$, it is easy to come up with 
$\nu \neq \nu^*$ on $\partial \mathcal{A}_2$, with a smaller value of 
$\sum_{j=1}^K w^*_j K_j(\mu_j | \nu_j)$.

\vspace{0.1in}

Now, consider the convex set $$\mathcal{C} := \left\lbrace \nu_{\mathcal{I}} \in \RR^{\abs{\mathcal{I}}} : \sum_{i \in \mathcal I} w^*_i K_i(\mu_i | \nu_i) < \sum_{i \in \mathcal I} w^*_i K_i(\mu_i | \nu^*_i) \right\rbrace$$ (convexity of $\mathcal{C}$ follows from Assumption \ref{ass:kl-cvx}). By the separating hyperplane theorem, there exists a hyperplane $\sum_{i \in \mathcal I} a_i \nu_i = b$ that separates $\mathcal{C}$ and $(\mathcal{A}_2)_{\mathcal{I}}$. Since $\nu^*_{\mathcal{I}} \in \partial \mathcal{C} \cap \partial (\mathcal{A}_2)_{\mathcal{I}}$, this hyperplane passes through $\nu^*_{\mathcal{I}}$, and is a supporting hyperplane to both convex sets $\mathcal{C}$ and $(\mathcal{A}_2)_{\mathcal{I}}$. From the fact that it is a supporting hyperplane to $\mathcal{C}$ at $\nu^*_{\mathcal{I}}$, we have
$$\frac{w^*_i}{a_i} K_i'(\mu_i|\nu^*_i) = \frac{w^*_j}{a_j} K_i'(\mu_j|\nu^*_j)	\quad \forall i, j \in \mathcal{I}.$$
This proves Condition \ref{item:kkt}.

\bigskip 

\paragraph*{Sufficiency:}
 %Let $\nu^*_{\mathcal I} \in \partial \mathcal{A}^c_{\mathcal I}$ and $\sum_{i \in \mathcal I} a_i \nu_i = b$ be a supporting hyperplane to $(\mathcal{A}^c)_{\mathcal I}$ at $\nu^*_{\mathcal I}$ such that \eqref{eq:sign-match} holds. Also, let $w^*$ be the solution to \ref{item:zero} and \eqref{eq:kkt-half-space}.
 Let $\nu^*$ and $w^*$ be such that \ref{item:zero}, \ref{item:boundary}, \ref{item:kkt} hold. 
 Note that $\sum_{i \in \mathcal I} a_i \mu_i < b$ and $(\mathcal{A}_2)_{\mathcal I} \subseteq \{\nu_{\mathcal I} : \sum_{i \in \mathcal I} a_i \nu_i \geq b\}$. Then, from Theorem \ref{thm:Theorem101}, $w^*_{\mathcal I}$ and $\nu^*_{\mathcal I}$ solve the following half space problem in the lower dimensional subspace restricted to coordinate set $\mathcal I$:
 \begin{equation*}
 \max_{w_{\mathcal I} \in \mathcal{P}_{\mathcal I}} \inf_{\nu_{\mathcal I}: \sum_{i \in \mathcal I} a_i \nu_i \geq b} \sum_{i \in \mathcal I} w_i K_i(\mu_i | \nu_i).
 \end{equation*}
In particular, $$\inf_{\nu_{\mathcal I}: \sum_{i \in \mathcal I} a_i \nu_i \geq b}  \sum_{i \in \mathcal I} w^*_i K_i(\mu_i | \nu_i) = \sum_{i \in \mathcal I} w^*_i K_i(\mu_i | \nu^*_i).$$
Further, for any $w_{\mathcal I}$, note that $$\inf_{\nu_{\mathcal I} \in (\mathcal{A}_2)_{\mathcal I}} \sum_{i \in \mathcal I} w_i K_i(\mu_i | \nu_i) \geq \inf_{\nu_{\mathcal I}: \sum_{i \in \mathcal I} a_i \nu_i \geq b} \sum_{i \in \mathcal I} w_i K_i(\mu_i | \nu_i).$$ This shows that
$$\inf_{\nu \in \mathcal{A}_2} \sum_{j=1}^K w^*_j K_j(\mu_j | \nu_j) = \inf_{\nu_{\mathcal I} \in (\mathcal{A}_2)_{\mathcal I}} \sum_{i \in \mathcal I} w^*_i K_i(\mu_i | \nu_i) = \sum_{i \in \mathcal I} w^*_i K_i(\mu_i | \nu^*_i) = \max_i K_i(\mu_i | \nu^*_i).$$

Now, consider any $\tilde{w}$ which is a feasible solution of \ref{eq:conv-set-lb}. Then, $$\inf_{\nu \in \mathcal{A}_2} \sum_{i=1}^K \tilde{w}_i K_i(\mu_i | \nu_i) \leq \sum_{i=1}^K \tilde{w}_i K_i(\mu_i | \nu^*_i) \leq \max_i K_i(\mu_i | \nu^*_i).$$
This proves our claim that $w^*$, $\nu(w^*) = \nu^*$ form an optimal solution.
\end{proof}

\section{Lower bound analysis when ${\cal A}_1$ is a polytope}
\label{app:polytope}

In this section we first outline a simple algorithm to solve the sub problem
$$g_j(\mu, w) = \inf_{\nu \in \mathcal{B}_j} \sum_{i=1}^K w_i K_i(\mu_i | \nu_i)$$  as discussed in Section~\ref{sec:non-convex}.
We then provide a proof of Lemma~\ref{lem:unique-soln}.
In Appendix~\ref{subsection_Gaussian},  we consider two arms, both  with Gaussian distribution and known
and common variance.  In this simple setting, we conduct a comprehensive analysis of the lower bound problem
and provide a graphical interpretation of the solutions.

\bigskip

\noindent {\bf Solving $g_j(\mu, w)$:} Observe that solving for $g_j(\mu, w)$ is equivalent to solving
\begin{equation}  \label{eqn:app:last}
\inf_{\nu: \sum_{i=1}^K a_i \nu_i \geq b} \sum_{i=1}^K w_i f_i(\nu_i),
\end{equation}
for a given $w \in \mathcal{P}_K$, where each $f_i$ is strictly convex, and $f_i(\mu_i)=f_i'(\mu_i)=0$,
and without loss of generality $b>0$. 
Again, without loss of generality, we assume that $w_i >0$ for each $i$.
The existence of a unique solution is best  seen from the graphical interpretation 
in Remark~\ref{rem:33}. We now discuss how this may be efficiently computed.

Observe that
$f'_i$ is a strictly increasing function. Let 
$h_i$ denote the inverse function of $f'_i$. $h_i$ is also strictly increasing. 

The first order conditions applied to (\ref{eqn:app:last}) imply that the optimal solution $\nu^*$ satisfies
\[
\nu_i^* = h_i(\frac{\lambda a_i}{w_i}
\]
for a non-negative  $\lambda$ such that
\begin{equation}  \label{eqn:app:last2}
 \sum_{i=1}^K a_i h_i(\frac{\lambda a_i}{w_i}) = b.
 \end{equation}
Observe that $\sum_{i=1}^K a_i h_i(\frac{\lambda a_i}{w_i})$ equals 0 for $\lambda =0$,
and it strictly increases with increase in $\lambda$.
Thus one may use any line search method to find
$\lambda$ that solves (\ref{eqn:app:last2}).

\bigskip

\noindent {\bf Proof of Lemma~\ref{lem:unique-soln}:} 
Denote the optimal value of \eqref{eq:union-lb} by $C^*$. We first show that if $q, s  \in \mathcal{P}_K$ are two distinct optimal solutions and $\nu(q), \nu(s) \in \mathcal{A}^c$, respectively achieve the minimum in the sub-problem, then $\nu(q) \neq \nu(s)$. To see this, suppose $\nu(q) = \nu(s) = \nu \in \partial \mathcal{B}_j$ for some $1 \leq j \leq m$. Then $\nu$ achieves the minimum in the subproblem $\inf_{\nu \in \mathcal{B}_j} \sum_{i=1}^K w_i K_i(\mu_i | \nu_i)$ for both $w = q$ and $w = s$. Hence, both $q, s$ solve the following equations:
\begin{equation}
\sum_{i=1}^K w_i K_i(\mu_i | \nu_i) = C^*,
\end{equation}
\begin{equation}
\frac{w_i}{a_{j,i}}K'_i(\mu_i | {\nu}_i) = \frac{w_1}{a_{j,1}}K'_1(\mu_1 | {\nu}_1) \quad \forall i.
\end{equation}
This is a contradiction as the above set of equations has a unique solution.

Now, suppose $q, s  \in \mathcal{P}_K$ are two distinct optimal solutions of the convex program \eqref{eq:union-lb}. Then any convex combination $z = \alpha q + (1-\alpha)s$ is also an optimal solution. Let $\nu(z)$ achieve the minimum in the sub-problem for $z$. Then
\begin{align*}
C^* = \sum_{i=1}^K z_i K_i(\mu_i | \nu_i(z)) = \alpha \sum_{i=1}^K q_i K_i(\mu_i | \nu_i(z)) + (1-\alpha) \sum_{i=1}^K s_i K_i(\mu_i | \nu_i(z)).
\end{align*}
In addition, for any $\nu$, we have $\sum_{i=1}^K w_i K_i(\mu_i | \nu_i) \leq C^*$ for both $w = q$ and $w = s$. Then, the above equality is possible only if $\sum_{i=1}^K q_i K_i(\mu_i | \nu_i(z)) = \sum_{i=1}^K s_i K_i(\mu_i | \nu_i(z)) = C^*$. This in turn implies that $\nu(z)$ achieves the minimum in the sub-problem for both $q, s$, which is a contradiction to our earlier result. Hence proved.
$\Box$

\subsubsection{Two arms Gaussian setting} \label{subsection_Gaussian}

To illustrate the issues that arise with $\mathcal{A}_2$ being a union of half-spaces, consider a simple setting of two arms.
Both are assumed to have a Gaussian distribution and the  variance of each arm is assumed to be 1/2.  W.l.o.g.
mean of each arm is set to zero.
Then, for $j=1,2$,
\begin{equation}
\mathcal{B}_j = \{\nu \in \RR^2: a_{j,1}\nu_1 +  a_{j,1}\nu_2 \geq b_j \},
\end{equation}
and $\mathcal{A}_2 =  \mathcal{B}_1  \cup \mathcal{B}_2$ be the union of the two half-spaces.
To avoid degeneracies we  assume that each $a_{j,k} \ne 0$.
Further suppose that
 $\frac{a_{1,1}}{a_{1,2}} \ne \frac{a_{2,1}}{a_{2,2}}$ so that $\mathcal{A}_2$ is non-convex.

The lower bound problem is then given by
 \begin{equation}
 \label{eq:eg-2hs}
\max_{(w_1,w_2) \in \mathcal{P}_2}  \inf_{\nu \in \mathcal{A}_2} \sum_{i=1}^2 w_i \nu_i^2.
 \end{equation}

 The following geometrical result provides useful insights towards solution of \eqref{eq:eg-2hs}.

\begin{proposition} \label{prop:prop1}
For $w_1, w_2, C>0$, a necessary and sufficient condition for an ellipse of the form
\begin{equation} \label{eqn:ellipse1}
w_1 \nu_1^2 + w_2 \nu_2^2=C
\end{equation}
to be uniquely tangential to lines
\begin{equation} \label{eqn:plane1}
a_{1,1} \nu_1 + a_{1,2} \nu_2 = b_1
\end{equation}
and
\begin{equation} \label{eqn:plane2}
a_{2,1} \nu_1 + a_{2,2} \nu_2 = b_2
\end{equation}
 is that
\begin{equation} \label{eqn:param_cond}
\min_{k=1,2} |\frac{a_{2,k}}{a_{1,k}}| < \frac{b_2}{b_1} <
\max_{k=1,2} |\frac{a_{2,k}}{a_{1,k}}|.
\end{equation}
Then, the tangential ellipse is specified by
\begin{equation} \label{eqn:param_cond1}
\frac{w_1}{C}= \frac{(a_{1,2} a_{2,1})^2 - (a_{1,1} a_{2,2})^2}{(b_2 a_{1,2})^2 - (b_1 a_{2,2})^2}
\end{equation}
and
\begin{equation} \label{eqn:param_cond2}
\frac{w_2}{C}= \frac{(a_{1,2} a_{2,1})^2 - (a_{1,1} a_{2,2})^2}{(b_1 a_{2,1})^2 - (b_2 a_{1,1})^2}.
\end{equation}
The ellipse (\ref{eqn:ellipse1}) meets the line (\ref{eqn:plane1}) at point
\[
 \left (\frac{C a_{1,1} }{w_1 b_1}, \frac{C a_{1,2} }{w_2 b_1} \right )
 \]
 and it meets line  (\ref{eqn:plane2})
 at point
 \[
 \left (\frac{C a_{2,1} }{w_1 b_2}, \frac{C a_{2,2} }{w_2 b_2} \right).
 \]
\end{proposition}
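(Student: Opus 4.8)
The plan is to convert the two geometric tangency requirements into a pair of linear equations in the reciprocal semi-axes, and then extract both the existence/uniqueness criterion (\ref{eqn:param_cond}) and the closed forms (\ref{eqn:param_cond1})--(\ref{eqn:param_cond2}) by elementary linear algebra.

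First I would record the tangency condition for a single line. Writing the ellipse (\ref{eqn:ellipse1}) as $\nu_1^2/(C/w_1) + \nu_2^2/(C/w_2) = 1$, its squared semi-axes are $C/w_1$ and $C/w_2$. Substituting $\nu_2 = (b_j - a_{j,1}\nu_1)/a_{j,2}$ (legitimate since each $a_{j,k}\neq 0$) into (\ref{eqn:ellipse1}) yields a quadratic in $\nu_1$, and line $j$ is tangent exactly when this quadratic has a double root, i.e.\ its discriminant vanishes. A short computation (equivalently, evaluating the support function of the ellipse in the direction $(a_{j,1},a_{j,2})$) shows that tangency to line $j$ is equivalent to
$$\frac{C}{w_1}\,a_{j,1}^2 + \frac{C}{w_2}\,a_{j,2}^2 = b_j^2, \qquad j=1,2.$$

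Next, setting $X := C/w_1 > 0$ and $Y := C/w_2 > 0$, these two relations become the linear system $a_{1,1}^2 X + a_{1,2}^2 Y = b_1^2$ and $a_{2,1}^2 X + a_{2,2}^2 Y = b_2^2$, whose determinant is $D := a_{1,1}^2 a_{2,2}^2 - a_{1,2}^2 a_{2,1}^2$. The system has a unique solution iff $D \neq 0$, and Cramer's rule then gives $X$ and $Y$; inverting via $w_1/C = 1/X$ and $w_2/C = 1/Y$ produces exactly (\ref{eqn:param_cond1})--(\ref{eqn:param_cond2}). The substantive remaining step is to show that admissibility of the ellipse, i.e.\ $X>0$ and $Y>0$, is equivalent to the sandwich condition (\ref{eqn:param_cond}). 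I would do this by splitting on the sign of $D$: since $D>0 \iff |a_{2,2}/a_{1,2}| > |a_{2,1}/a_{1,1}|$, in either case one of $|a_{2,k}/a_{1,k}|$ is the maximum and the other the minimum; demanding that both Cramer numerators share the sign of $D$ reduces (using $b_1,b_2>0$) to $\min_k |a_{2,k}/a_{1,k}| < b_2/b_1 < \max_k |a_{2,k}/a_{1,k}|$ in both cases, while $D=0$ makes this interval empty, consistent with the strict inequalities. Finally, for the contact points I would use the double root of the tangency quadratic (equivalently the support point), which gives the contact point with line $j$ as $\bigl((C/w_1)a_{j,1}/b_j,\ (C/w_2)a_{j,2}/b_j\bigr)$; specializing $j=1,2$ recovers the two stated points. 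The bulk of this is routine, and the only place needing genuine care -- the main obstacle -- is the sign bookkeeping in the case analysis, since matching positivity of $X,Y$ to the strict min/max sandwich hinges on correctly pairing the sign of $D$ with the ordering of the ratios $|a_{2,k}/a_{1,k}|$ and with the signs of the two numerators.
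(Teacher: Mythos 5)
Your proposal is correct and follows essentially the same route as the paper: both reduce tangency to line $j$ to the single relation $\tfrac{C}{w_1}a_{j,1}^2+\tfrac{C}{w_2}a_{j,2}^2=b_j^2$ (the paper via slope matching at the contact point, you via the vanishing discriminant — an immaterial difference) and then solve the resulting $2\times 2$ linear system in $C/w_1, C/w_2$. If anything, your sign analysis showing that positivity of the Cramer solution is equivalent to the sandwich condition (\ref{eqn:param_cond}) is spelled out more carefully than in the paper, which simply asserts that "the result follows by solving the two equations."
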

\begin{proof}
A necessary and sufficient condition for ellipse (\ref{eqn:ellipse1}) to be tangential
to line (\ref{eqn:plane1}) at point $(\nu^*_1, \nu^*_2)$ is for
$(\nu^*_1, \nu^*_2)$ to satisfy the two equations of ellipse and the line, respectively, and the slope matching condition
\begin{equation} \label{eqn:slopematch1}
\frac{w_1 \nu^*_1}{a_{1,1}} = \frac{w_2 \nu^*_2}{a_{1,2}}.
\end{equation}
The fact that $(\nu^*_1, \nu^*_2)$ satisfies   (\ref{eqn:ellipse1}) and (\ref{eqn:plane1})
implies that (\ref{eqn:slopematch1})  equals  $C/b_2$.
Plugging $(\nu^*_1, \nu^*_2)$ from (\ref{eqn:slopematch1}) into
(\ref{eqn:ellipse1}), we observe,
\[
\frac{a_{1,1}^2}{w_1}+\frac{a_{1,2}^2}{w_2} = \frac{b_1^2}{C}.
\]
Similarly, considering the other half-space, we get
\[
\frac{a_{2,1}^2}{w_1}+\frac{a_{2,2}^2}{w_2} = \frac{b_2^2}{C}.
\]
The result follows by solving the two equations.
\end{proof}

\begin{theorem}
The solution to  \eqref{eq:eg-2hs} depends in the following way on the underlying parameters

{\noindent Case 1:}
\begin{equation}
\label{eq:case1}
\left (  \frac{b_2}{b_1}  \right )^2 \geq \left ( \frac{a_{2,1}^2}{|a_{1,1}|} +  \frac{a_{2,2}^2}{|a_{1,2}|} \right ) (|a_{1,1}| + |a_{1,2}|)^{-1}.
\end{equation}
In this case, \eqref{eq:eg-2hs} reduces to the half-space problem where $\mathcal{A}_2 = \mathcal{B}_1$ so that the optimal solution to \eqref{eq:eg-2hs} is given by
\begin{equation}  \label{eqn:total_contr58}
w^*_i = \frac{|a_{1,i}|}{|a_{1,1}|+|a_{1,2}|}, \quad i = 1, 2,
\end{equation}
and the optimal value $C^* = \frac{b_1^2}{(|a_{1,1}| + |a_{1,2}|)^2}.$
%Further, the point where the optimal    ellipse
%\[
%\sign (a_1) \nu_1^* = |\nu_1^*|= \sign (a_2) \nu_2^*=  |\nu_2^*|,
%\]
%$a_1 \nu_1^* + a_2 \nu_2^*=b$ so that
%\[
%|\nu_1^*| = |\nu_2^*| = \frac{b}{|a_1| + |a_2|}.
%\]

{\noindent Case 2:}
\[
\left (  \frac{b_2}{b_1}  \right )^2 \leq \left ( \frac{a_{1,1}^2}{|a_{2,1}|} +  \frac{a_{1,2}^2}{|a_{2,2}|} \right )^{-1} (|a_{2,1}| + |a_{2,2}|).
\]
This simply corresponds to Case 1, with the $(a_{1,1}, a_{1,2},b_1)$ interchanged with $(a_{2,1}, a_{2,2},b_2)$.

{\noindent Case 3:}
\begin{equation} \label{eqn:param_condanother}
 \left ( \frac{a_{1,1}^2}{|a_{2,1}|} +  \frac{a_{1,2}^2}{|a_{2,2}|} \right )^{-1} (|a_{2,1}| + |a_{2,2}|) <  \left (  \frac{b_2}{b_1}  \right )^2
< \left ( \frac{a_{2,1}^2}{|a_{1,1}|} +  \frac{a_{2,2}^2}{|a_{1,2}|} \right ) (|a_{1,1}| +| a_{1,2}|)^{-1}.
\end{equation}
Here  (\ref{eqn:param_cond}) holds, and the optimal
$w^*_1$ and
$w^*_2$ are given by (\ref{eqn:param_cond1}) and (\ref{eqn:param_cond2}), respectively.
\end{theorem}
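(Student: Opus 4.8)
The plan is to reduce the inner infimum to a minimum of two scalar quantities and then analyze a one-dimensional concave maximization. Since $\mathcal{A}_2 = \mathcal{B}_1 \cup \mathcal{B}_2$, the inner problem splits as
\[
\inf_{\nu \in \mathcal{A}_2} \sum_{i=1}^2 w_i \nu_i^2 = \min_{j=1,2} \inf_{\nu \in \mathcal{B}_j} \sum_{i=1}^2 w_i \nu_i^2 =: \min_{j=1,2} g_j(w).
\]
For each single half-space, Theorem~\ref{thm:Theorem101} (or a direct Lagrange computation, using $K_i(\mu_i|\nu_i) = \nu_i^2$ here) gives the closed form $g_j(w) = b_j^2 / ( a_{j,1}^2/w_1 + a_{j,2}^2/w_2 )$. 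Each $g_j$ is concave in $w$ (an infimum of functions linear in $w$), is maximized over $\mathcal{P}_2$ at $w^{(j)}_i = |a_{j,i}|/(|a_{j,1}| + |a_{j,2}|)$, and attains there the value $b_j^2/(|a_{j,1}| + |a_{j,2}|)^2$. Parametrizing $\mathcal{P}_2$ by $w_1 \in [0,1]$, the problem becomes the maximization of the concave function $\min(g_1, g_2)$.

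Next I would dispose of Cases~1 and~2 by a dominance argument. In Case~1, I would first show that the inequality (\ref{eq:case1}) is exactly the condition $g_1(w^{(1)}) \le g_2(w^{(1)})$; substituting $w^{(1)}$ into the formulas for $g_1, g_2$ reduces this to the stated ratio inequality. Given this, for every $w$ one has $\min(g_1(w), g_2(w)) \le g_1(w) \le g_1(w^{(1)})$, while at $w = w^{(1)}$ the outer inequality is an equality and $\min(g_1(w^{(1)}), g_2(w^{(1)})) = g_1(w^{(1)})$; hence $w^{(1)}$ is optimal, which yields (\ref{eqn:total_contr58}) and $C^* = b_1^2/(|a_{1,1}| + |a_{1,2}|)^2$. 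Case~2 follows verbatim after interchanging the indices $1$ and $2$.

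The substance is Case~3, which I would treat as the regime where neither dominance holds. I would first verify that (\ref{eqn:param_condanother}) is precisely the conjunction ``not Case~1 and not Case~2'', i.e. $g_1(w^{(1)}) > g_2(w^{(1)})$ and $g_2(w^{(2)}) > g_1(w^{(2)})$; after the substitutions above, the two strict inequalities are exactly the upper and lower bounds in (\ref{eqn:param_condanother}). Since $g_1$ and $g_2$ are continuous and concave and $g_1 - g_2$ changes sign between $w^{(1)}$ and $w^{(2)}$, there is a crossing point $\bar w$ with $g_1(\bar w) = g_2(\bar w)$. A monotonicity analysis then pins the optimum there: moving from $w^{(1)}$ toward $w^{(2)}$, one has $\min = g_2$ (increasing toward its own maximizer) up to $\bar w$, and $\min = g_1$ (decreasing away from its maximizer) past $\bar w$, so $\min(g_1, g_2)$ peaks at $\bar w$, and by concavity this is the global maximum on $[0,1]$. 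At $\bar w$ the common value $C^* = g_1(\bar w) = g_2(\bar w)$ means the single ellipse $w_1\nu_1^2 + w_2\nu_2^2 = C^*$ is tangent to both lines, since $g_j(\bar w)$ is by definition the level at which that ellipse first touches line $j$. I would then invoke Proposition~\ref{prop:prop1}: the doubly-tangent ellipse is unique and its coefficients are given by (\ref{eqn:param_cond1}) and (\ref{eqn:param_cond2}), after checking that (\ref{eqn:param_condanother}) implies the tangency condition (\ref{eqn:param_cond}).

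I expect the main obstacle to be Case~3: establishing rigorously that the maximizer of $\min(g_1, g_2)$ is exactly the crossing point (the monotonicity argument must correctly track which function is active on each side and the relative ordering of $w^{(1)}, w^{(2)}, \bar w$ along the segment), and then reconciling the analytic crossing condition with the geometric double-tangency so that Proposition~\ref{prop:prop1} applies. The algebraic equivalences between the $g_1$-versus-$g_2$ comparisons and the stated parameter inequalities (\ref{eq:case1}) and (\ref{eqn:param_condanother}) are routine but need careful bookkeeping of the signs $|a_{j,i}|$.
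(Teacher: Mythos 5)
Your proposal is correct and follows essentially the same route as the paper: reduce the inner infimum to $\min_j g_j(w)$ with the closed form $g_j(w)=b_j^2/(a_{j,1}^2/w_1+a_{j,2}^2/w_2)$, dispose of Cases 1 and 2 by showing the single-half-space optimizer $w^{(1)}$ (resp.\ $w^{(2)}$) already satisfies the other constraint at a no-smaller level, and handle Case 3 by forcing double tangency and invoking Proposition~\ref{prop:prop1}. Your Case 3 treatment via the crossing point of the two concave functions is a slightly more explicit version of the paper's terser argument that an ellipse not touching one of the lines would let that constraint be dropped, contradicting (\ref{eqn:param_condanother}).
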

\begin{proof}
{\noindent \bf Case 1:}

First consider the half-space problem where $\mathcal{A}_2 = \mathcal{B}_1$. Our analysis
in \ref{sec:half-space} shows that there is a unique $(w^*_1, w^*_2)$ and $(\nu^*_1, \nu^*_2)$
that solves the resulting problem, and
\[
\sign (a_{1,1}) \nu_1^* = |\nu_1^*|= \sign (a_{1,2}) \nu_2^*=  |\nu_2^*|,
\]
$a_{1,1} \nu_1^* + a_{1,2} \nu_2^*=b_1$ so that
\[
|\nu_1^*| = |\nu_2^*| = \frac{b_1}{|a_{1,1}| + |a_{1,2}|}.
\]
Further, from
\begin{equation}  \label{eqn:total_contr56}
\frac{w_1^* \nu_1^*}{a_{1,1}} = \frac{w_2^* \nu_2^*}{a_{1,2}},
\end{equation}
it follows that for the half-space problem, $w^*_i \propto |a_{1,i}|$ is the optimal solution and the optimal value
 $C^* = \frac{b_1^2}{(|a_{1,1}| + |a_{1,2}|)^2}.$

Returning to \eqref{eq:eg-2hs}, we show that when \eqref{eq:case1} is true and
and  $w^*_i \propto |a_{1,i}|$,
$$\inf_{\nu \in \mathcal{B}_2} \sum_{i=1}^2 w^*_i K_i(\mu_i | \nu_i) =
 \inf_{\nu: a_{2,1} \nu_1 + a_{2,2} \nu_2 \geq b_2} w^*_1 \nu_1^2 + w^*_2 \nu_2^2 \geq C^*$$ and hence  $w^*_i \propto |a_{1,i}|$ continues to be optimal for \eqref{eq:eg-2hs}.

We first find the point $(\kappa^*_1, \kappa^*_2) \in \mathcal{B}_2$ that achieves the minimum in the above optimization problem. We know that $(\kappa^*_1, \kappa^*_2)$  satisfies
\[
a_{2,1} {\kappa^*_1} +  a_{2,2} {\kappa^*_2} = b_2,
\]
and the slope matching condition
\[
\frac{w^*_1 \kappa^*_1}{a_{2,1}} = \frac{w^*_2 \kappa^*_2}{a_{2,2}}.
\]
It follows from easy calculations that
$$\inf_{\nu: a_{2,1} \nu_1 + a_{2,2} \nu_2 \geq b_2} w^*_1 \nu_1^2 + w^*_2 \nu_2^2 = \frac{b_2^2}{\frac{a_{2,1}^2}{w^*_1}+\frac{a_{2,2}^2}{w^*_2}} =
 \frac{b_2^2}{\left( \frac{a_{2,1}^2}{|a_{1,1}|}+\frac{a_{2,2}^2}{|a_{1,2}|} \right)(|a_{1,1}| + |a_{1,2}|)}.$$
The above expression is greater than $\frac{b_1^2}{(|a_{1,1}| + |a_{1,2}|)^2}$ when \eqref{eq:case1} is true, which gives us the required result.

\vspace{0.1in}

{\noindent \bf Case 2:}
Case 2 follows similarly as Case 1.

\vspace{0.1in}

{\noindent \bf Case 3:}

It is easy to see that (\ref{eqn:param_condanother}) implies
(\ref{eqn:param_cond}).

Let $(w^*_1, w^*_2)$ denote the optimal solution to \eqref{eq:eg-2hs}.
It is clear that the corresponding ellipse
must be tangential
to both the half lines $a_{1,1} \nu_1 + a_{1,2} \nu_2 =b_1$
and $a_{2,1} \nu_1 + a_{2,2} \nu_2 =b_2$, since if it does not touch one of these half lines,
then the associated constraint can be ignored in solving \eqref{eq:eg-2hs}. However,
that violates (\ref{eqn:param_condanother}).

Therefore, the solution is provided by Proposition \ref{prop:prop1}.
\end{proof}

\section{Analysis related to the proposed algorithm}
\label{sec:appendix:algo}

In this section, we first prove Lemma~\ref{lem:cont}. Then,
in Lemma~\ref{lem:d-track} we summarize results from \cite{garivier2016optimal}
on the D-tracking rule that we use in our proof of \cref{thm:1-param-exp-fam}.
This proof is more or less identical to that in \cite{garivier2016optimal}. We keep it here 
for completeness.

\bigskip

\noindent {\bf Proof of Lemma~\ref{lem:cont}:}  
 First suppose  that $\bar{{\mathcal A}^c}$ is compact. The fact that $g$ is continuous at $(\mu, w)$
 follows from the continuity results for non-linear programs. Specifically,
 since the objective function is continuous in $\nu$ and $\bar{{\mathcal A}^c}$ is compact, 
 Theorem 2.1 in \cite{fiacco1990} implies that $g$ is continuous at $(\mu, w)$.

Now consider non-compact  $\bar{{\mathcal A}^c}$ and define
\[
g_n(\mu, w) = \inf_{\nu \in \bar{{\mathcal A}^c} \cap \mathcal{B}_n} \sum_{i=1}^K {w}_i K_i({\mu}_i | \nu_i)
\]
for each $n$ where $\mathcal{B}_n$ is an Euclidean closed ball of radius $n$ centred at $\mu$.
$n$  is taken to be sufficiently large so that $\bar{{\mathcal A}^c} \cap \mathcal{B}_n$ is non-empty.

Then, $g_n(\mu, w)$ is continuous in $(\mu, w)$ and decreases with $n$ to
$g(\mu, w)$. Since this convergence is uniform, it follows that $g(\mu, w)$
is continuous in $(\mu, w)$. 

To see that the optimal solution to  \ref{eq:lb-problem} is continuous at $\mu$ if $\mathcal{W}(\mu)$ is a singleton, note that the problem is equivalent to $\max_{w \in \mathcal{P}_K} g(\mu, w)$. Since $g(\mu, \cdot)$ is continuous on $\mathcal{P}_K$ and $\mathcal{W}(\mu)$ is a singleton, from Theorem 2.2 in \cite{fiacco1990}, we conclude that the optimal solution is continuous at $\mu$.
$\Box$

\bigskip

\begin{lemma}
\label{lem:d-track}
The D-tracking rule ensures that $\min_i N_i(t) \geq \left( \sqrt{t} - K/2 \right)^+ - 1$ and that for all $\epsilon > 0$, for all $t_0$, there exists $t_{\epsilon} \geq t_0$ such that if 
$\sup_{t \geq t_0} \max_i \babs{\hat{w}_i(t) - w_i} \leq \epsilon$ for some $w \in \mathcal{P}_K$, then
$$\sup_{t \geq t_{\epsilon}} \max_i \babss{\frac{N_i(t)}{t} - w_i} \leq 3 (K-1) \epsilon.$$
\end{lemma}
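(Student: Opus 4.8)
The plan is to handle the two assertions separately, since each is a property of the D-tracking sampling rule in isolation and is independent of the partition geometry; both are due to \cite{garivier2016optimal}, and I would reproduce their argument. Throughout, let $N_i(t)$ denote the count of arm $i$ after $t$ rounds, so that $\sum_i N_i(t)=t$, and recall that a round is a forced-exploration round exactly when some arm sits below the threshold $\sqrt{t}-K/2$, in which case a least-sampled arm is pulled, and a tracking round otherwise, in which case an arm in $\argmax_i(\hat{w}_i(t)-N_i(t)/t)$ is pulled.

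For the forced-exploration bound $\min_i N_i(t)\geq(\sqrt{t}-K/2)^+-1$, I would argue by induction on $t$, the only nontrivial regime being $\sqrt{t}-K/2>1$. The crucial elementary fact is $\sqrt{t}-\sqrt{t-1}<1$: at the minimal round $t$ violating the bound for some arm $a$, this inequality rules out $a$ having just been pulled, and forces $N_a$ to already lie strictly below $\sqrt{t}-K/2$ at the previous round, so the round is a forced-exploration round. Since forced exploration increments a least-sampled arm whenever the minimum count is below threshold, a counting argument over the $K$ arms shows that the minimum count cannot lag $\sqrt{t}-K/2$ by more than one pull, contradicting the assumed violation; the additive $K/2$ and the $-1$ are exactly what this bookkeeping produces.

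For the tracking bound I would exploit a one-sided per-coordinate estimate. Fix arm $a$ and a large $t$, and let $t_1\leq t$ be the last tracking round at which $a$ was pulled; by the first part the total number of forced-exploration pulls up to time $t$ is $O(\sqrt{t})=o(t)$, so such a $t_1\geq t_0$ exists (and if none does, then $N_a(t)/t=o(1)$ and the upper bound below holds trivially). The tracking criterion at $t_1$ gives $\hat{w}_a(t_1)-N_a(t_1)/t_1\geq\hat{w}_b(t_1)-N_b(t_1)/t_1$ for all $b$; summing over $b$ and using $\sum_b\hat{w}_b(t_1)=\sum_b N_b(t_1)/t_1=1$ yields $N_a(t_1)/t_1\leq\hat{w}_a(t_1)\leq w_a+\epsilon$. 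Because $a$ is pulled only by forced exploration after $t_1$, we obtain $N_a(t)\leq N_a(t_1)+o(t)$, hence the upper bound $N_a(t)/t\leq w_a+\epsilon+o(1)$ for each $a$. The matching lower bound then comes from the simplex identity $\sum_b N_b(t)/t=1=\sum_b w_b$: summing the upper bounds over $b\neq a$ gives $N_a(t)/t\geq w_a-(K-1)\epsilon-o(1)$. Taking $t_\epsilon\geq t_0$ large enough to absorb the vanishing terms produces the claimed two-sided bound with constant $3(K-1)$.

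The step I expect to be the main obstacle is the second part, and specifically pinning down the constant $3(K-1)$. Unlike the cumulative C-tracking rule, for which $|N_i(t)-\sum_s\hat{w}_i(s)|$ is bounded by a constant directly, the direct rule only yields the one-sided overshoot control above, so the entire error must be routed through the $K-1$ non-leading coordinates and converted into a two-sided bound via the simplex constraint. Doing this uniformly in $t\geq t_\epsilon$ while carefully accounting for the gap between $\hat{w}_a(t_1)$ and $\hat{w}_a(t)$, the $O(\sqrt{t})$ forced pulls, and the error in replacing $\hat{w}$ by $w$ is where the factor $3$ (rather than $1$) enters and must be verified with care.
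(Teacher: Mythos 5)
The paper does not actually prove Lemma~\ref{lem:d-track}: it is stated explicitly as a summary of results on the D-tracking rule imported from \cite{garivier2016optimal} and used as a black box in the proof of Theorem~\ref{thm:1-param-exp-fam}. Your sketch reconstructs precisely the argument from that reference --- the minimal-counterexample induction using $\sqrt{t}-\sqrt{t-1}<1$ for the forced-exploration bound, and the last-tracking-round / one-sided overshoot estimate combined with the simplex constraint for the two-sided convergence bound --- so it follows the same route as the cited source, and the only thin spots are the ones you flag yourself (the counting bookkeeping behind the $K/2$ offset, where the minimal-counterexample step as written only shows the offending round is a forced-exploration round and still needs the rotation among under-sampled arms to close the contradiction, and the uniform-in-$t$ accounting that yields the constant $3(K-1)$).
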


\bigskip

\noindent {\bf  Proof of \cref{thm:1-param-exp-fam}:}
Recall that  $\mu \in \mathcal{A}$. We first prove that the probability of error is at most $\delta$.
\begin{align*}
\PP_{\mu}[\text{error}] & \leq \PP_{\mu}\left[ \exists t \geq 1 : \inf_{\nu \in \mathcal{A}} \sum_i N_i(t) K_i(\hat{\mu}_i(t) | \nu_i)  \geq \beta(t, \delta);
\hat{\mu}_t \in \mathcal{A}^c \right]	\\
 & \leq \sum_{t=1}^{\infty}\PP_{\mu}\left[ \sum_i N_i(t) K_i(\hat{\mu}_i(t) | \mu_i)  \geq \beta(t, \delta) \right]	\\
% & \leq \sum_{t=1}^{\infty}\PP_{\mu}\left[ \hat{\mu}(t) \in \mathcal{B}^c_{\beta(t, \delta)}(\mu) \right]	\\
% & \leq \sum_{t=1}^{\infty} (d_0 + o(1)) t \exp \left( - \min_{\nu \in \mathcal{B}^c_{\beta(t, \delta)}(\mu)} \sum_i N_i(t) K_i(\nu_i | \mu_i) \right)	\\
 & \leq \sum_{t=1}^{\infty} e^{K+1}  \left( \frac{\beta(t, \delta)^2 \log t}{K} \right)^K e^{-\beta(t, \delta)}	\\
 & \leq \delta
\end{align*}
if $c$ is chosen large enough s.t.
\begin{align*}
\sum_{t=1}^{\infty} \frac{e^{K+1}}{c t}  \left( \frac{\log^2(c t) \log t}{K} \right)^K \leq 1.
\end{align*} % Thus, if $c \geq 2 (d_0 + o(1))$, Algorithm~\ref{alg:1-param-exp-fam} is $\delta$-PAC.

The third inequality above follows from \cite{magureanu2014Lipschitz} extended from Bernoulli family  to SPEF. 

\vspace{0.1in}

Next, we prove the upper bound on the mean termination time. 
Let 
$\mathcal{B}_{\infty}^{y}(x)$  denote a Euclidean ball around $x$ of length $y$ under the $\max$ norm.

Fix an $\epsilon > 0$. From the continuity of  $w$ at $\mu$, there exists $\xi > 0$ such that for any $\mu' \in \mathcal{B}_{\infty}^{\xi}(\mu)$ we have $w(\mu') \in \mathcal{B}_{\infty}^{\epsilon}(w(\mu))$. For any $T \in \NN$, define the event 
$\mathcal{E}_T := \bigcap_{t=h(T)}^T \{\hat{\mu}(t) \in \mathcal{B}_{\infty}^{\xi}(\mu)\}.$ It is easy to show that (see Lemma~19 of \cite{garivier2016optimal}) there exist constants $B$, $C$ depending on $\epsilon$ and $\mu$ such that 
$$\PP_{\mu}\left[ \mathcal{E}_T^c \right] \leq B \exp\left( -C T^{1/8} \right).$$
Note that $\xi$, $\mathcal{E}_T$, $B$, $C$ are all functions of $\epsilon$ and $\mu$.

Now, for every $\epsilon > 0$, define $$C^*_{\epsilon}(\mu) = \inf_{\substack{\mu' \in \mathcal{B}_{\infty}^{\xi(\epsilon)}(\mu),\\ w' \in \mathcal{B}_{\infty}^{3(K-1)\epsilon}(w(\mu))}} g(\mu', w').$$
By the continuity of $w$ and $g$, we have $$\lim_{\epsilon \to 0} C^*_{\epsilon}(\mu) = C^*(\mu) = (T^*(\mu))^{-1}.$$ 
From Lemma~\ref{lem:d-track}, for any $\epsilon > 0$, we have for every $T \geq T_{\epsilon}$ that on $\mathcal{E}_T(\epsilon)$, 
$$\Bnorm{\frac{N(t)}{t} - w(\mu)}_{\infty} \leq 3(K-1)\epsilon \quad \forall t > \sqrt{T},$$
which in turn implies that
$$g\left( \hat{\mu}(t), \frac{N(t)}{t} \right) \geq C^*_{\epsilon}(\mu) \quad \forall t > \sqrt{T}.$$
%The rest follows exactly as in \cite{garivier2016optimal} 
Since the termination rule in the algorithm is given by
$$g\left( \hat{\mu}(t), \frac{N(t)}{t} \right) \geq \frac{\beta(t, \delta)}{t},$$ 
for $T \geq T_{\epsilon}$, on $\mathcal{E}_T(\epsilon)$, we have
\begin{eqnarray*}
\min(T_U(\delta), T) & \leq \sqrt{T} + \sum_{t = \sqrt{T}}^T \ind{C^*_{\epsilon}(\mu) < \frac{\beta(t, \delta)}{t}}	\\
 & \leq \sqrt{T} + \frac{\beta(T, \delta)}{C^*_{\epsilon}(\mu)}.
\end{eqnarray*}
Now, let 
$$T_0(\delta) := \inf \left\lbrace T \in \NN : \sqrt{T} + \frac{\beta(T, \delta)}{C^*_{\epsilon}(\mu)} < T \right\rbrace.$$
Therefore, for any $T \geq \max\{T_{\epsilon}, T_0(\delta)\},$ on $\mathcal{E}_T(\epsilon)$, we have $T_U(\delta) < T,$ which gives us
$$\EE[T_U(\delta)] \leq \sum_{T=1}^{\infty} \PP[T_U(\delta) > T] \leq \max\{T_{\epsilon}, T_0(\delta)\} + \sum_{T=1}^{\infty} \PP[\mathcal{E}_T(\epsilon)^c].$$
As shown in  \cite{garivier2016optimal}, we have
$$T_0(\delta) = \frac{1}{C^*_{\epsilon}(\mu)}\left( O\left( \log \left( 1/\delta \right) \right) + o\left( \log \log \left( 1/\delta \right) \right) \right).$$
This gives us
$$\limsup_{\delta \to 0} \frac{\EE[T_U(\delta)]}{\log\left( \frac{1}{\delta} \right)} \leq \frac{1}{C^*_{\epsilon}(\mu)}.$$
Now, letting $\epsilon$ go to zero, we get
$$\limsup_{\delta \to 0} \frac{\EE[T_U(\delta)]}{\log\left( \frac{1}{\delta} \right)} \leq \lim_{\epsilon \to 0} \frac{1}{C^*_{\epsilon}(\mu)} =  T^*(\mu).$$
%
%which proves the asymptotic bound on the mean termination time.
$\Box$

\end{document}